\documentclass[lettersize,journal]{IEEEtran}
\usepackage{amsmath,amsfonts}
\usepackage{algorithmic}
\usepackage{algorithm}
\usepackage{array}
\usepackage[caption=false,font=normalsize,labelfont=sf,textfont=sf]{subfig}
\usepackage{textcomp}
\usepackage{stfloats}
\usepackage{url}
\usepackage{verbatim}
\usepackage{graphicx}
\usepackage{cite}
\newcounter{case}
\renewcommand{\thecase}{\arabic{case}}
\usepackage{threeparttable}
\usepackage{amsmath}
\usepackage{amssymb}
\usepackage{mathtools}
\usepackage{amsthm}
\usepackage{tcolorbox}
\usepackage{booktabs}
\usepackage{pifont} % 提供 \ding{51} 和 \ding{55}
\usepackage{xcolor} % 颜色支持
\usepackage{float}

\usepackage{makecell}
\usepackage{hyperref} 
\usepackage[capitalize,noabbrev]{cleveref}

\usepackage{booktabs, tabularx, adjustbox, ltablex}
\usepackage{amsmath}
\usepackage{ulem}
\theoremstyle{plain}
\newtheorem{Theorem}{Theorem}
\newtheorem{Definition}[Theorem]{Definition}
\newtheorem{Corollary}[Theorem]{Corollary}

\newtheorem{Lemma}[Theorem]{Lemma}

\theoremstyle{remark}

\usepackage{multirow}
\theoremstyle{Case}

\usepackage{xcolor}
\usepackage{tcolorbox}
\definecolor{mblue}{HTML}{BDD8E4}
\newcommand{\statebullet}{\textcolor{mblue}{\bullet}}

\newtcolorbox{BoxedDef}{
    colback=white,                 % 背景色：纯白
    colframe=black!90!white,       % 边框色：90%黑（极深灰，比纯黑柔和）
    boxrule=0.5pt,                 % 边框粗细：极细
    sharp corners,                 % 强制直角（正方形框）
    equal height group=expertboxes % 等高编组（如果在同一排，盒子会等高）
    left=2pt,        % 边框到文字的左边距
    right=2pt,       % 边框到文字的右边距
    top=3pt,         % 边框到文字的上边距
    bottom=3pt,      % 边框到文字的下边距
}

\definecolor{mpurple}{HTML}{CF9AB3}
\newcommand{\statepullet}{\textcolor{mpurple}{\bullet}}

\definecolor{myred}{HTML}{86003E}
\newcommand{\staterellet}{\textcolor{myred}{\bullet}}

\begin{document}

\title{Weisfeiler--Leman Test on Combinatorial Complexes: A Unified Expressive Power Framework in Topological Neural Networks}

% A Unified Weisfeiler--Leman Framework for Higher-Order Relational Learning

\author{Jiawen Chen, Qi Shao, Zhiqiang  Ge,~\IEEEmembership{Senior Member, IEEE}, Duxin Chen, Wenwu Yu,~\IEEEmembership{Senior Member, IEEE}
\thanks{This research was supported by the National Science and Technology Major Project of the Ministry of Science and Technology of China (Grant No.2024ZD0608104), the National Natural Science Foundation of China (Grants No.62233004, 62273090, and 62073076), the Zhishan Youth Scholar
Program, the Jiangsu Provincial Scientific Research Center of Applied Mathematics (Grant No. BK20233002), the Natural Science Foundation of Jiangsu Province of China (Grants No. BK20253018, and BK20253020), the Open Research Project of the State Key Laboratory of Industrial Control Technology, China (Grant No. ICT2025B54). (Corresponding authors: Duxin Chen, Wenwu Yu.) }% <-this % stops a space
\thanks{Jiawen Chen, Qi Shao, Zhiqiang Ge, Duxin Chen, and Wenwu Yu are with the Jiangsu Key Laboratory of Networked Collective Intelligence, School of Mathematics, Southeast University, Nanjing 210096, China. (e-mail:chenjiawen@seu.edu.cn; shaoqi@seu.edu.cn; zhiqiang.ge@hotmail.com; chendx@seu.edu.cn; wwyu@seu.edu.cn).} 
}
% <-this % stops a space
% \thanks{This paper was produced by the IEEE Publication Technology Group. They are in Piscataway, NJ.}% <-this % stops a space
% \thanks{Manuscript received April 19, 2021; revised August 16, 2021.}
% }

% The paper headers
% \markboth{Journal of \LaTeX\ Class Files,~Vol.~14, No.~8, August~2021}%
% {Shell \MakeLowercase{\textit{et al.}}: A Sample Article Using IEEEtran.cls for IEEE Journals}

% \IEEEpubid{0000--0000/00\$00.00~\copyright~2021 IEEE}
% Remember, if you use this you must call \IEEEpubidadjcol in the second
% column for its text to clear the IEEEpubid mark.

\maketitle
 
\begin{abstract}
Topological neural networks have emerged as effective tools for modeling higher-order relational structures beyond pairwise graphs, including hypergraphs, simplicial complexes, and cell complexes. However, existing Weisfeiler--Leman type expressivity analyses are typically developed on different structural domains and rely on domain-specific neighborhood systems, making their expressive powers difficult to compare within a common formalism. In this paper, we introduce the Combinatorial Complex Weisfeiler--Leman (CCWL) framework, a unified expressive power refinement defined on combinatorial complexes. By exploiting the ability of combinatorial complexes to represent both set-type relations and part--whole hierarchies, CCWL performs topological color refinement through four structural neighborhoods: boundary, co-boundary, lower adjacency, and upper adjacency. We show that, under specified lifting maps, CCWL can simulate several domain-specific WL-type refinements, thereby providing a common theoretical baseline for analyzing topological message passing. We further study the neighborhood sufficiency problem and prove that, under explicit coverage conditions, a reduced refinement using only lower- and upper-adjacent bridge information preserves the distinguishing power of the full four-neighborhood CCWL refinement. Guided by this theoretical result, we instantiate the reduced refinement as the Combinatorial Complex Isomorphism Network (CCIN). Experiments on synthetic and real-world benchmarks demonstrate that CCIN achieves competitive performance against representative graph and topological neural network baselines. Ablation studies and resource-efficiency analyses further support the effectiveness of the proposed lower/upper-neighborhood design. The source code and datasets are publicly available at \url{https://github.com/jiawenchen10/CCIN}.
\end{abstract}

\begin{IEEEkeywords}
Expressive power; 
Weisfeiler--Leman test; topological neural networks; higher-order graph neural network; graph representation learning.
\end{IEEEkeywords}

\section{Introduction}

\IEEEPARstart{R}{elational} data in the real world from biological networks to physical simulations, naturally exhibits complex, multi-way interactions that extend far beyond simple pairwise graphs. To capture these higher-order dependencies, the field of topological neural networks ~\cite{pham2025topological} has rapidly evolved. Various topological structures, such as hypergraphs~\cite{feng2024hypergraph,zhangimproved,besta2024demystifying}, simplicial complexes~\cite{wu2023simplicial,gurugubelli2023sann,tahademystifying}, and cellular complexes~\cite{bodnar2021weisfeiler,eitantopological}, have been widely adopted to represent set-based and part-whole hierarchical relations~\cite{pmlr-v235-papamarkou24a,millan2025topology,liu2023wl}.
However, this rapid expansion has led to a fragmented landscape. Existing higher-order neural networks are often designed for specific structural domains~\cite{huang2021unignn,huang2024higher,battiloro2024latent}, relying on different neighborhood definitions and customized message-passing mechanisms. Their expressive powers are difficult to compare under a unified theoretical framework~\cite{pmlr-v235-papamarkou24a}.

% \begin{figure}[htbp]
% \centering
% \setlength{\belowcaptionskip}{-15pt}
% \includegraphics[width=1\linewidth]{figures001.pdf}
% \caption{Illustration of Weisfeiler--Leman tests on four neighborhood function.}
% % \label{figs001}
% \end{figure}

The Weisfeiler--Leman (WL) test serves as a standard tool for characterizing the expressive power of Graph Neural Networks (GNNs)~\cite{weisfeiler1968reduction,xupowerful,feng2022powerful}. Recent studies have extended WL-type refinements beyond graphs, leading to hypergraph WL~\cite{zhangimproved,feng2024hypergraph}, simplicial WL-type tests induced by message-passing simplicial networks~\cite{bodnar2021weisfeilernips}, and cellular WL~\cite{bodnar2021weisfeiler}. However, these variants are defined on different structural domains and rely on different neighborhood systems, making their expressive powers difficult to compare under a common formalism. Consequently, the TDL community still lacks a unified theoretical baseline for evaluating expressivity across heterogeneous topological domains~\cite{papp2022theoretical}. Combinatorial complexes (CCs) have recently emerged as a unifying topological formalism that subsumes both set-type relations, such as graphs and hypergraphs, and part-whole hierarchies, such as simplicial and cell complexes~\cite{hajij2022topological,hajij2023combinatorial}. This makes CCs a natural mathematical domain for developing a unified WL-type refinement for topological message passing. Nevertheless, defining such a refinement on CCs is nontrivial. Despite the structural unification provided by CCs, the topological deep learning community still lacks a generalized theoretical baseline built upon this framework~\cite{papp2022theoretical,pmlr-v235-papamarkou24a}. Specifically, a universal WL-type test on CCs, one capable of systematically evaluating, comparing, and aligning expressivity across all heterogeneous topological domains—remains notably absent. 

Recent studies have explored combinatorial complexes as a common domain for higher-order learning and expressivity analysis. For instance, TopoTune~\cite{papillontopotune} follows the CWN-style topological message-passing paradigm~\cite{bodnar2021weisfeiler} by mapping higher-order relations to a strict Hasse graph, where propagation is performed along incidence-induced edges. This provides an architectural mechanism for learning on combinatorial complexes, but the strict Hasse-graph representation reduces the native neighborhood structure of CCs to incidence-driven propagation.  Logical expressiveness~\cite{akbari2026the} studies WL-type refinements on combinatorial complexes from a hierarchical perspective. CCMamba~\cite{chen2026ccmamba} addresses the scalability and long-range dependency issues of topological message passing by introducing state-space modeling to avoid the quadratic complexity of attention-based propagation, but it is not designed to characterize WL-type neighborhood sufficiency. A combinatorial complex naturally induces four structural relations: boundary, co-boundary, lower adjacency, and upper adjacency~\cite{eitantopological}. A full CCWL update would aggregate color multisets from all four relations. Nevertheless, these architectures do not provide a unified WL-type expressivity theory for combinatorial complexes, nor do they characterize which neighborhood relations are sufficient for preserving the distinguishing power of a full combinatorial-complex refinement.

To bridge these theoretical and architectural divides, we introduce the Combinatorial Complex Weisfeiler--Leman (CCWL) framework. By providing a unified expressivity baseline for topological message passing, our work addresses key challenges in topological deep learning recently highlighted in the literature~\cite{pmlr-v235-papamarkou24a}, including cross-domain message-passing generalization and expressivity analysis for topological neural networks. Our main contributions are summarized as follows:

\begin{itemize}
\item \textbf{Unified formalization.}
We introduce CCWL, a unified Weisfeiler--Leman refinement framework defined on combinatorial complexes. By formulating color refinement through boundary, co-boundary, lower-adjacency, and upper-adjacency relations, CCWL provides a common expressivity formalism for analyzing graphs, hypergraphs, simplicial complexes, and cell complexes within a single topological domain.

\item \textbf{Theoretical expressivity.}
We formally define the CCWL refinement and study the neighborhood sufficiency problem on combinatorial complexes. We prove that, under explicit coverage conditions, the refinement restricted to lower and upper adjacencies preserves the distinguishing power of the full four-neighborhood CCWL refinement. We further establish expressivity relations between CCWL and domain-specific WL variants, showing that several existing higher-order refinement procedures can be simulated by CCWL under appropriate lifting maps.

\item \textbf{Empirical validation.}
Guided by the theoretical results, we instantiate CCWL as the Combinatorial
Complex Isomorphism Network (CCIN). Extensive experiments on synthetic and
real-world benchmarks demonstrate that CCIN achieves competitive performance
against representative graph and topological neural network baselines.
\end{itemize}

\section{Related Work}

In this section, we review prior studies on topological neural networks for higher-order structures and expressive-power analysis of topological message passing. We further discuss recent combinatorial-complex frameworks and clarify the difference of their expressivity-oriented approaches.

\subsection{Topological Neural Networks for Higher-Order Structures}

Graph neural networks have achieved remarkable success in modeling pairwise relational data. However, many real-world systems involve interactions beyond pairs, including group interactions, simplicial dependencies, cellular structures, and part-whole relations. To capture such dependencies, topological neural networks have been developed for higher-order relational learning in topological deep learning~\cite{pham2025topological,verma2024topological,chen2021topological}.
Hypergraph neural networks model high-order relations through node--hyperedge incidences~\cite{feng2019hypergraph,huang2021unignn,xie2025k}. Representative models, such as HGNN+~\cite{9795251}, UniGNN~\cite{huang2021unignn}, and recent high-order hypergraph architectures, improve message passing by explicitly modeling vertex--hyperedge interactions. 
\begin{table}[htbp]
\scriptsize 
\centering
\caption{Overview of expressive-power methods and their corresponding structural domains.}
\label{tab:expressivity_overview}
\renewcommand{\arraystretch}{1.3} 
\begin{tabular}{lp{1.2cm} p{5.0cm}}
\toprule  
\textbf{Category} & \textbf{Subtype} & \textbf{Representative methods} \\
\midrule

\multirow{2}{*}{\makecell[l]{Set-Type \\ Relational \\ Expressivity}} 
& Graph 
& GIN~\cite{xupowerful}, DGCNN~\cite{zhang2018end}, IGN~\cite{cai2022convergence}, PPGNs~\cite{maroninvariant}, GSN~\cite{bouritsas2022improving}, rMPNN~\cite{paolino2024weisfeiler}, HomoGNN~\cite{zhangbeyond}  \\
\cmidrule{2-3}

& Hypergraph 
& Hypergraph WL~\cite{feng2024hypergraph}, KGWL~\cite{zhangimproved}, ED-HNN~\cite{wangequivariant}, RePHINE~\cite{immonen2023going} \\ 

\midrule

\multirow{2}{*}{\makecell[l]{Part-Whole \\ Relational \\ Expressivity}}  
& Simplicial 
& MPSN~\cite{bodnar2021weisfeilernips}, IMPSN~\cite{eijkelboom2023n}, EMPSN~\cite{eijkelboom2023n}, ETNNs~\cite{battiloro2024n}, SMCN~\cite{eitantopological} \\ 
\cmidrule{2-3}

& Cell complex 
& CWN~\cite{bodnar2021weisfeiler}, TopNets~\cite{verma2024topological}, BSCNets~\cite{chen2022bscnets}, PathCWL~\cite{truong2024weisfeiler} \\

\midrule

\makecell[l]{Topological \\ Expressivity} 
& \makecell[l]{Combinatorial \\ complex} 
& TopoTune~\cite{papillontopotune}, kCCWL~\cite{akbari2026the}, CCMamba~\cite{chen2026ccmamba}, \textbf{CCWL and CCIN (ours)} \\

\bottomrule
\end{tabular}
\end{table}
While effective for set-valued relations, they mainly capture incidence structures and do not fully model richer cross-rank topological relations, such as boundary, co-boundary, lower adjacency, and upper adjacency.
Simplicial neural networks use boundary and co-boundary operators to propagate information among vertices, edges, triangles, and higher-dimensional simplices~\cite{bodnar2021weisfeilernips,eijkelboom2023n,wu2023simplicial,liuclifford}. Cell complex neural networks further generalize message passing to regular cell complexes~\cite{hajij2020cell,bodnar2021weisfeiler}. Recent studies incorporate attention, equivariance, and geometric constraints to enhance topological architectures~\cite{battiloro2024n,ballester2024attending}. These models are typically tied to fixed topological domains and fixed subsets of neighborhood relations, making their expressive power difficult to compare under a unified theoretical framework~\cite{tahademystifying,eitantopological}.

\subsection{Expressive Power of Topological Neural Networks}
The Weisfeiler--Leman test has been widely used to analyze the expressive power of neural message passing. For graphs, the 1-WL test provides a theoretical reference for standard message-passing GNNs~\cite{weisfeiler1968reduction,maroninvariant,xupowerful,feng2022powerful,puny2023equivariant}. Higher-order variants, such as $k$-WL, sparse WL, and loopy WL, improve discriminative power but usually incur higher computational complexity~\cite{morris2020weisfeiler,paolino2024weisfeiler,papp2022theoretical,10818675}. In parallel, substructure-aware and equivariant GNNs enhance expressivity by incorporating subgraph, path, and isomorphism-based information~\cite{zhangbeyond,zhangrethinking,thiede2021autobahn,zhang2023complete,zhang2024schur,bevilacquaequivariant,bouritsas2022improving,zhang2024expressive}. 
WL-type refinements have also been extended to higher-order topological domains. Hypergraph WL tests refine node and hyperedge representations through bipartite incidence relations~\cite{zhangimproved,feng2024hypergraph}. Simplicial WL-type refinements are induced by boundary and co-boundary operators~\cite{bodnar2021weisfeilernips,eitantopological}, while cellular WL refines cells of different dimensions~\cite{bodnar2021weisfeiler,chen2022bscnets,truong2024weisfeiler}. However, these variants are defined on different structural domains and rely on different neighborhood systems, making their expressive powers difficult to compare under a common formalism~\cite{pmlr-v235-papamarkou24a}. More importantly, they do not identify which neighborhood relations are sufficient when these structures are represented within a unified combinatorial-complex framework. This motivates a WL-type refinement on combinatorial complexes that can unify heterogeneous topological domains while characterizing the role of different neighborhood relations.

Combinatorial complexes provide a unified formalism for set-type relations and part-whole hierarchies~\cite{hajij2022topological,hajij2023combinatorial}. Existing works have recently been studied from different aspects. TopoTune~\cite{papillontopotune} extends the CWN-style message-passing paradigm~\cite{bodnar2021weisfeiler} by mapping higher-order relations to strictly Hasse graph and employing WL test on incidence-based propagation, while other works~\cite{chen2026ccmamba,akbari2026the} focuses on hierarchical WL-type refinements on combinatorial complexes. As shown in Table~\ref{tab:expressivity_overview}, these methods do not aim to unify WL variants across graphs, hypergraphs, simplicial complexes, and cell complexes. In contrast, our work defines a WL-type refinement directly on combinatorial complexes, establishes its relation to domain-specific WL variants, and characterizes when a reduced lower/upper-neighborhood rule preserves the distinguishing power of full CCWL.

\section{Preliminaries}
In this section, we first introduce the basic notions of combinatorial complexes and then define a Weisfeiler--Leman test refinement on combinatorial complexes. The main notations are summarized in Table~\ref{tab:notations}.

\subsection{Combinatorial Complexes}
Combinatorial complexes provide a general topological representation~\cite{hajij2023combinatorial,wu2023simplicial} that
consists of both set-type structures, such as graphs and hypergraphs, and
part-whole structures, such as simplicial and cell complexes. Formally, a
combinatorial complex is defined as follows.

\begin{Definition}[\textbf{Combinatorial Complex}]
\label{definition01}
A combinatorial complex (CC) is a triple
$\mathcal{CC}=(\mathrm{S},\mathrm{C},\mathrm{rk})$, where $\mathrm{S}$ is a
set of base elements, $\mathrm{C}\subseteq \mathcal{P}(\mathrm{S})$ is a set
of cells, and $\mathrm{rk}:\mathrm{C}\to \mathbb{Z}_{\ge 0}$ is a rank
function. For every $v\in\mathrm{S}$, the singleton $\{v\}$ belongs to
$\mathrm{C}$ and satisfies $\mathrm{rk}(\{v\})=0$. The rank function is
order-preserving: for any $\sigma,\tau\in\mathrm{C}$, if
$\sigma\subseteq\tau$, then $
\mathrm{rk}(\sigma)\leq \mathrm{rk}(\tau)$. 
A cell $\sigma\in\mathrm{C}$ with $\mathrm{rk}(\sigma)=k$ is called a
$k$-cell. The dimension of $\mathcal{CC}$ is defined as $
\dim(\mathcal{CC}) := \max_{\sigma\in\mathrm{C}} \mathrm{rk}(\sigma).$
\end{Definition}

The rank function allows us to refer to different structural levels of a
complex. For example, the $0$-skeleton contains only $0$-cells, the $1$-skeleton
contains $0$- and $1$-cells, and the $2$-skeleton further includes $2$-cells,
such as faces. The incidence structure of $\mathcal{CC}$ is described by a boundary relation.
We write $\eta\prec\sigma$ if $\eta$ is an immediate boundary cell of $\sigma$.
Its reflexive-transitive closure is denoted by $\preceq$. Thus,
$\eta\preceq\sigma$ means that $\eta$ lies on the boundary chain of $\sigma$.
Throughout this work, we focus on rank-adjacent incidences, i.e., incidences
between cells whose ranks differ by one. Non-rank-adjacent incidences can be
obtained through chains of rank-adjacent incidences.

\begin{table}[htbp]
    \centering
    \caption{Notation table for the main symbols.}
    \renewcommand{\arraystretch}{1.2}  
    \begin{tabular}{c|l}
    \toprule
     \textbf{Symbol} & \textbf{Description} \\
     \midrule
     $\mathcal{CC}$ & A combinatorial complex $(\mathrm{S},\mathcal{C},\mathrm{rk})$ \\ 
     $\mathrm{S}$ & Set of base elements, e.g., nodes or $0$-cells \\ 
     $\mathrm{C}$ & Set of cells, $\mathrm{C}\subseteq \mathcal{P}(\mathrm{S})$ \\ 
     $\sigma,\tau,\eta,\gamma$ & Cells in $\mathcal{C}$ \\ 
     $\mathrm{rk}(\sigma)$ & Rank of cell $\sigma$ \\
     $\prec$ & Immediate boundary relation between cells \\
     $\preceq$ & Reflexive-transitive closure of $\prec$ \\
     $\mathcal{B}(\sigma)$ & Boundary neighborhood of cell $\sigma$ \\
     $\mathcal{C}(\sigma)$ & Co-boundary neighborhood of cell $\sigma$ \\
     $\mathcal{N}_{\downarrow}(\sigma)$ & Lower-adjacent neighborhood of cell $\sigma$ \\
     $\mathcal{N}_{\uparrow}(\sigma)$ & Upper-adjacent neighborhood of cell $\sigma$ \\
     $\{\{\cdot\}\}$ & Multiset operation \\
     $c^{(t)}(\sigma)$ & Color assigned to cell $\sigma$ at iteration $t$ \\
     $c_{\mathcal{N}}^{(t)}(\sigma)$ & Color multiset from neighborhood $\mathcal{N}$ at iteration $t$ \\
     $\mathrm{HASH}(\cdot)$ & Injective multiset hashing function \\
     $h_\sigma^{(l)}$ & Feature embedding of cell $\sigma$ at layer $l$ \\
     $m_{\mathcal{N}}^{(l+1)}$ & Aggregated message from neighborhood $\mathcal{N}$ at layer $l+1$ \\
     $\mathrm{AGG}(\cdot)$ & Permutation-invariant aggregation function \\
     $\phi^{(l)}, \psi_{\mathcal{N}}^{(l)}$ & Learnable update and message functions at layer $l$ \\
    \bottomrule
    \end{tabular}
    \label{tab:notations}
\end{table}

\subsection{Combinatorial Complex Weisfeiler--Leman}

We next define the four neighborhoods in CCs.

\begin{Definition}[\textbf{Neighborhood Functions}]
\label{cellneighhood}
Let $\mathcal{CC}=(\mathrm{S},\mathrm{C},\mathrm{rk})$ be a combinatorial
complex and let $\sigma\in\mathrm{C}$. The four structural neighborhoods of
$\sigma$ are defined as follows:

1. Boundary neighborhood: $\mathcal{B}(\sigma) = \{ \tau| \tau \prec \sigma \}$.
These are the lower-rank cells contained in the boundary of
$\sigma$.
% , with the neighborhood specified by the boundary matrix $\mathcal{B}_r$.
% , i.e. the neighborhood cell vectors connected with edges $\tau$. 

2. Co-boundary neighborhood: $\mathcal{C}(\sigma) = \{ \tau| \sigma \prec \tau \}$. 
These are the higher-rank cells that contain $\sigma$ in their boundary.
% specified by the boundary matrix $\mathcal{B}^{T}_r$. 
% For instance, the co-boundary cells of a node are the edges it is part of. 

3. Lower neighborhood: $\mathcal{N}_{\downarrow}(\sigma)
=
\{\tau\in\mathrm{C}\mid
\tau\neq\sigma,\ 
\mathrm{rk}(\tau)=\mathrm{rk}(\sigma),\
\mathcal{B}(\sigma)\cap\mathcal{B}(\tau)\neq\emptyset
\}$. 
These are cells of the same rank as $\sigma$ that share at least one boundary cell with $\sigma$.

4. Upper neighborhood: $\mathcal{N}_{\uparrow}(\sigma)
=
\{
\tau\in\mathrm{C} $ $ | 
\tau\neq\sigma,\ 
\mathrm{rk}(\tau)=\mathrm{rk}(\sigma),\
\mathcal{C}(\sigma)\cap\mathcal{C}(\tau)\neq\emptyset
\}.
$
These are cells of the same rank as $\sigma$ that share at least one co-boundary cell with $\sigma$.

\end{Definition}
A message may propagate across ranks, for example, from a $0$-cell to an incident $1$-cell. 

Based on these four structural neighborhoods, we define a
Weisfeiler--Leman-type color refinement on combinatorial complexes. Let
$c^{(t)}:\mathrm{C}\to\mathbb{N}$ denote the coloring of cells at iteration
$t$. The multiset operator is denoted by $\{\{\cdot\}\}$.

\begin{Definition}[\textbf{Neighborhood Color Multisets}]
\label{def:neighborhood_color_multisets}
Let $c:\mathrm{C}\to\mathbb{N}$ be a coloring of the cells in a combinatorial
complex $\mathcal{CC}=(\mathrm{S},\mathrm{C},\mathrm{rk})$. For two cells
$\sigma,\tau\in\mathrm{C}$, define the shared boundary and shared co-boundary
bridge sets as $\mathcal{B}(\sigma, \tau):=\mathcal{B}(\sigma)\cap \mathcal{B}(\tau)$, and $\mathcal{C}(\sigma,\tau):=\mathcal{C}(\sigma)\cap\mathcal{C}(\tau)$. For cell $\sigma\in\mathrm{C}$, we define four neighborhood color
multisets:

1. The boundary colors: $c_{\mathcal{B}}(\sigma) = \{\{c_{\tau} | \tau\in \mathcal{B}(\sigma)\}\}$.

2. The co-boundary colors: $c_{\mathcal{C}}(\sigma) = \{\{c_{\tau} \mid \tau\in \mathcal{C}(\sigma)\}\}$. 

3. The lower adjacent colors: $c_{\mathcal{N}_{\downarrow}}(\sigma) = \{\{(c_{\tau}, c_{\delta}) | \tau \in \mathcal{N}_{\downarrow}(\sigma) \text{ and } \delta \in \mathcal{B}(\sigma, \tau)\}\}. $

4. The upper adjacent colors: $c_{\mathcal{N}_{\uparrow}}(\sigma) = \{\{(c_{\tau}, c_{\delta}) | \tau \in \mathcal{N}_{\uparrow}(\sigma) \text{ and } \delta \in \mathcal{C}(\sigma, \tau)\}\}.$
\end{Definition}

Building on the four neighborhood color multisets, we define the CCWL iterative color refinement algorithm.

\begin{figure}[htbp]
  \centering
\includegraphics[width=1\linewidth]{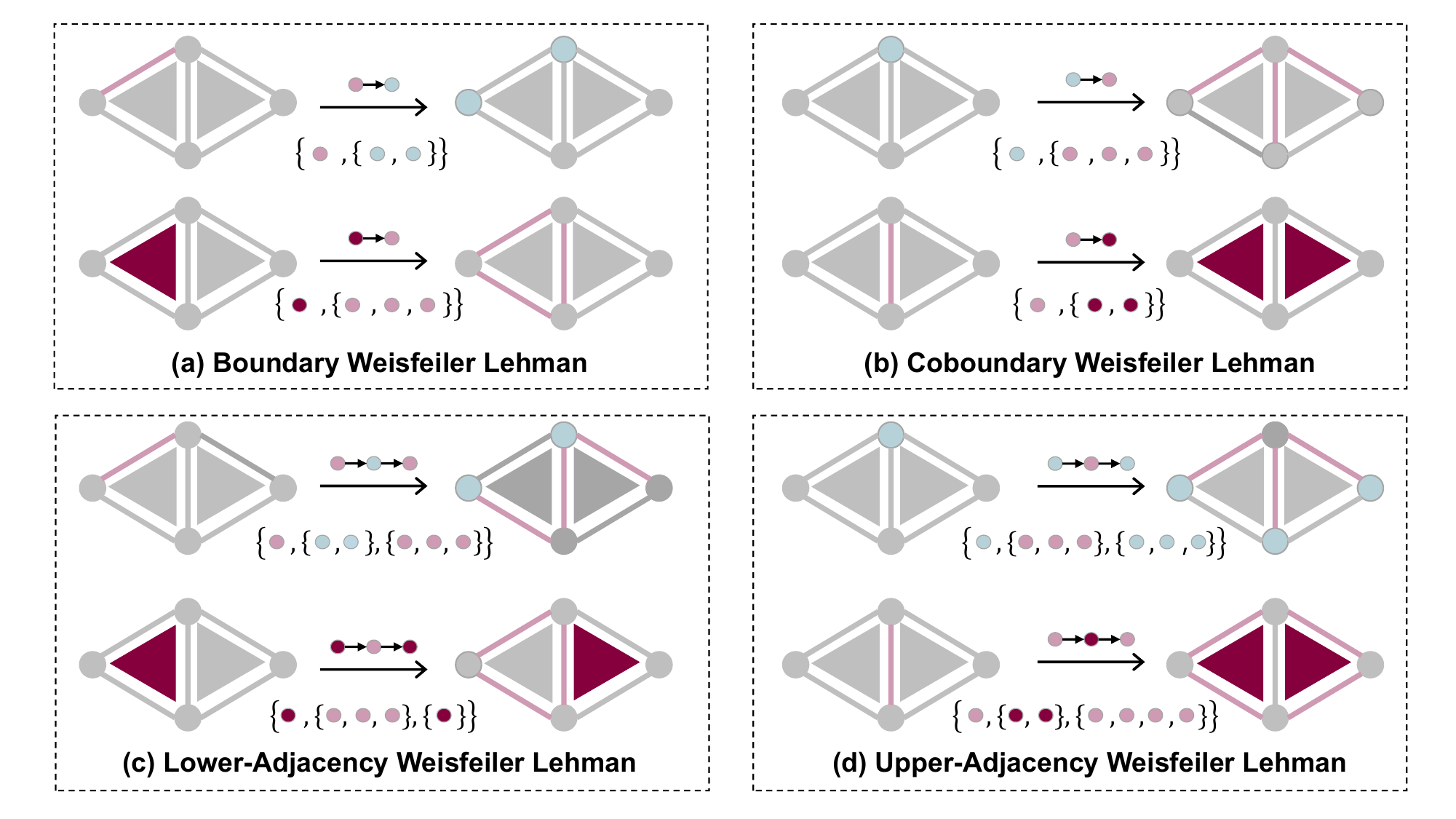}
  \caption{Illustration of the four structural neighborhood relations used in CCWL: boundary, co-boundary, lower adjacency, and upper adjacency.}
  \label{figs001}
\end{figure}

\begin{Definition}[\textbf{CCWL Test}]
\label{ccwldef}
Let $\mathcal{CC}_1 = (\mathrm{S}_1, \mathrm{C}_1, \mathrm{rk}_1)$ and $\mathcal{CC}_2 = (\mathrm{S}_2, \mathrm{C}_2, \mathrm{rk}_2)$ be two $\mathcal{CC}$, and a coloring function $c$ on cells.
Given an initial coloring $c^{(0)}:\mathrm C_1\cup\mathrm C_2\to\mathbb N$,
the CCWL iteration proceeds as follows.

1. All the cell of the same rank $\sigma\in \mathrm{C}_1, \tau\in \mathrm{C}_2$ are initialized with the same color, defined as $c^{(0)}(\sigma),c^{(0)}(\tau)$ .

2. Given the color $c^{(t)}$ of cell $\sigma$ at iteration $t$, we update the color of next iteration as $c^{(t+1)}(\sigma)$ by injective mapping the multisets of colors by $\mathrm{HASH}$ function: $ c^{(t+1)}(\sigma) = \mathrm{HASH} ( c^{(t)}(\sigma), c^{(t)}_{\mathcal{B}}(\sigma), c^{(t)}_{\mathcal{C}}(\sigma), c^{(t)}_{\mathcal{N}_{\downarrow}}(\sigma), c^{(t)}_{\mathcal{N}_{\uparrow}}(\sigma))$,

3. It stops when a stable coloring is reached. 
\end{Definition}

Fig~\ref{figs001} illustrates the refinement iteratively updates cell colors until the color partition
stabilizes in CCWL. At iteration $t$, CCWL distinguishes $\mathcal{CC}_1$ and $\mathcal{CC}_2$ if $
\{\{c^{(t)}(\sigma)\mid \sigma\in\mathrm{C}_1\}\}
\neq
\{\{c^{(t)}(\tau)\mid \tau\in\mathrm{C}_2\}\}$. 
In this case, the two complexes are declared non-isomorphic. If the color
multisets remain identical until stabilization, the test is inconclusive.

\section{Methods}
In this section, we first define combinatorial complex isomorphism and establish the isomorphism of CCWL. 
We then formalize coloring refinement, show that CCWL subsumes WL variants on graphs, hypergraphs, simplicial complexes, and cell complexes, and prove the sufficiency of lower and upper adjacencies to preserve the distinguishing power of full CCWL under coverage conditions. 
Finally, we instantiate the resulting refinement rule as the Combinatorial Complex Isomorphism Network (CCIN).

\subsection{Combinatorial Complex Isomorphism}

Unlike graph isomorphism, which preserves pairwise adjacency between vertices, 
combinatorial complex isomorphism must preserve both cell ranks and the 
incidence relations among cells of different ranks.

\begin{Definition}[\textbf{Combinatorial Complex Isomorphism}]
\label{dfccin}
Let 
$\mathcal{CC}_1=(\mathrm{S}_1,\mathrm{C}_1,\mathrm{rk}_1)$ and 
$\mathcal{CC}_2=(\mathrm{S}_2,\mathrm{C}_2,\mathrm{rk}_2)$ be two
combinatorial complexes. 
We say that $\mathcal{CC}_1\cong\mathcal{CC}_2$ if there exists a bijection
$\varphi:\mathrm{C}_1\to\mathrm{C}_2$ such that, for every 
$\sigma\in\mathrm{C}_1$,$
\mathrm{rk}_2(\varphi(\sigma))=\mathrm{rk}_1(\sigma),$
$\varphi\big(\mathcal{B}_1(\sigma)\big)
=
\mathcal{B}_{2}\big(\varphi(\sigma)\big)$, $\varphi\big(\mathcal{C}_1(\sigma)\big)
=
\mathcal{C}_{2}\big(\varphi(\sigma)\big)$, 
$\varphi\big(\mathcal{N}_{\uparrow,1}(\sigma)\big)
=
\mathcal{N}_{\uparrow,2}\big(\varphi(\sigma)\big)$,$
\varphi\big(\mathcal{N}_{\downarrow,1}(\sigma)\big)
=
\mathcal{N}_{\downarrow,2}\big(\varphi(\sigma)\big) 
$.
\end{Definition}
% Since lower and upper adjacencies are induced by shared boundary and 
% co-boundary cells, respectively, this definition also preserves 
% $\mathcal{N}_{\downarrow}$ and $\mathcal{N}_{\uparrow}$.
This definition ensures that an isomorphism preserves not only the cell labels
up to relabeling, but also the rank structure and the boundary incidence
relations of the complex. Consequently, the induced boundary, co-boundary,
lower-adjacent, and upper-adjacent neighborhoods are preserved under
$\varphi$. This property is essential for proving the isomorphism invariance
of CCWL color refinement.

\begin{Lemma}
\label{lemma44}
Let $\mathcal{CC}_1$ and $\mathcal{CC}_2$ be two combinatorial complexes. 
If $\mathcal{CC}_1 \cong \mathcal{CC}_2$, then for every iterations $t \geq 0$, the equality of multisets holds $\left \{ \left \{ c^{(t)}(\sigma):\sigma \in \mathrm{C}_1 \right \} \right \} =
\left \{ \left \{ c^{(t)}(\tau):\tau \in \mathrm{C}_2 \right \} \right \}$.
\end{Lemma}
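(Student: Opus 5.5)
The plan is to prove the stronger, pointwise statement: if $\varphi:\mathrm{C}_1\to\mathrm{C}_2$ is an isomorphism as in Definition~\ref{dfccin}, then for every $t\ge 0$ and every $\sigma\in\mathrm{C}_1$,
\begin{equation*}
c^{(t)}(\varphi(\sigma)) = c^{(t)}(\sigma).
\end{equation*}
The multiset equality in Lemma~\ref{lemma44} follows at once. Since $\varphi$ is a bijection, the multiset $\{\{c^{(t)}(\tau):\tau\in\mathrm{C}_2\}\}$ can be reindexed as $\{\{c^{(t)}(\varphi(\sigma)):\sigma\in\mathrm{C}_1\}\}$. By the pointwise identity, this equals $\{\{c^{(t)}(\sigma):\sigma\in\mathrm{C}_1\}\}$. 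The pointwise claim is proved by induction on $t$.

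\emph{Base case.} By Definition~\ref{ccwldef}, the initial coloring depends only on rank: cells of equal rank in $\mathrm{C}_1\cup\mathrm{C}_2$ receive the same color. Since $\mathrm{rk}_2(\varphi(\sigma))=\mathrm{rk}_1(\sigma)$, we get $c^{(0)}(\varphi(\sigma))=c^{(0)}(\sigma)$. If initial features are present, one also assumes $\varphi$ preserves them.

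\emph{Inductive step.} Assume the identity holds at iteration $t$. It suffices to show that each of the five arguments of $\mathrm{HASH}$ agrees for $\sigma$ and $\varphi(\sigma)$; then the two new colors coincide, because $\mathrm{HASH}$ is a well-defined function (injectivity is not even needed for this direction).
\begin{itemize}
\item The own-color argument agrees by the induction hypothesis.
\item For the boundary multiset, $\varphi$ restricts to a bijection $\mathcal{B}_1(\sigma)\to\mathcal{B}_2(\varphi(\sigma))$ because $\varphi(\mathcal{B}_1(\sigma))=\mathcal{B}_2(\varphi(\sigma))$. Combined with the induction hypothesis, $c^{(t)}_{\mathcal{B}}(\varphi(\sigma))=c^{(t)}_{\mathcal{B}}(\sigma)$.
\item The co-boundary multiset is handled identically.
\end{itemize}

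\emph{Main obstacle.} The adjacency multisets consist of pairs $(c(\tau),c(\delta))$ indexed by the set $\{(\tau,\delta):\tau\in\mathcal{N}_\downarrow(\sigma),\ \delta\in\mathcal{B}(\sigma,\tau)\}$. Here I must show that $(\tau,\delta)\mapsto(\varphi(\tau),\varphi(\delta))$ is a bijection onto the corresponding index set for $\varphi(\sigma)$. Neighborhood preservation, $\varphi(\mathcal{N}_{\downarrow,1}(\sigma))=\mathcal{N}_{\downarrow,2}(\varphi(\sigma))$, handles the first coordinate. For the bridge sets, use that $\varphi$ is injective and commutes with $\mathcal{B}$ on both $\sigma$ and $\tau$:
\begin{equation*}
\varphi(\mathcal{B}_1(\sigma)\cap\mathcal{B}_1(\tau))=\varphi(\mathcal{B}_1(\sigma))\cap\varphi(\mathcal{B}_1(\tau))=\mathcal{B}_2(\varphi(\sigma))\cap\mathcal{B}_2(\varphi(\tau)).
\end{equation*}
Hence $\varphi$ maps $\mathcal{B}_1(\sigma,\tau)$ bijectively onto $\mathcal{B}_2(\varphi(\sigma),\varphi(\tau))$. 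Applying the induction hypothesis to both coordinates of each pair gives equality of $c^{(t)}_{\mathcal{N}_\downarrow}$. The upper multiset $c^{(t)}_{\mathcal{N}_\uparrow}$ follows the same way, using co-boundary preservation and intersections of $\mathcal{C}$-sets. In fact, rank and boundary preservation already imply the neighborhood conditions, since $\mathcal{N}_\downarrow$ and $\mathcal{N}_\uparrow$ are defined from $\mathrm{rk}$, $\mathcal{B}$ and $\mathcal{C}$; I would remark on this but rely directly on Definition~\ref{dfccin}.

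This closes the induction. The conclusion holds for every $t$, including after stabilization, so CCWL never distinguishes isomorphic complexes.
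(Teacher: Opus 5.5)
Your proof is correct and follows the same route as the paper. Both argue by induction on $t$ for the pointwise identity $c^{(t)}(\varphi(\sigma))=c^{(t)}(\sigma)$, using rank preservation for the base case and neighborhood preservation for the inductive step, and then reindex along the bijection $\varphi$ to get the multiset equality. Your version is more careful than the paper's in two places: you explicitly check that $\varphi$ maps the bridge sets $\mathcal{B}_1(\sigma,\tau)$ and $\mathcal{C}_1(\sigma,\tau)$ onto their counterparts, which the pair-valued adjacency multisets require, and you correctly note that only well-definedness of $\mathrm{HASH}$ is used, not the injectivity the paper invokes.
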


\begin{proof}
By the isomorphism Definition~\ref{dfccin}, there exists a bijection $\varphi : \mathrm{C}_1 \to \mathrm{C}_2$. We prove by induction as Definition~\ref{ccwldef}.
At $t=0$, the initialization rules ensure $c^{(0)}(\sigma) = c^{(0)}(\varphi(\sigma))$. Assuming $c^{(t)}(\sigma) = c^{(t)}(\varphi(\sigma))$ holds at step $t$. For step $t+1$, the CCWL update computes the new color based on the color multisets of $c^{(t)}(\sigma)$. Since $\varphi$ maps the neighborhoods of $\sigma$ to those of $\varphi(\sigma)$, their aggregated input multisets are identical at $t$-th step. Due to injectivity of the $\mathrm{HASH}$ function, then $c^{(t+1)}(\sigma) = c^{(t+1)}(\varphi(\sigma))$. The claim follows by induction. Finally, since $\varphi$ is a bijection on cells,
the color multisets of $\mathcal{CC}_1$ and $\mathcal{CC}_2$ are identical at
every iteration.
\end{proof}

\begin{figure*}[htbp]
 \centering
\includegraphics[width=0.95\linewidth]{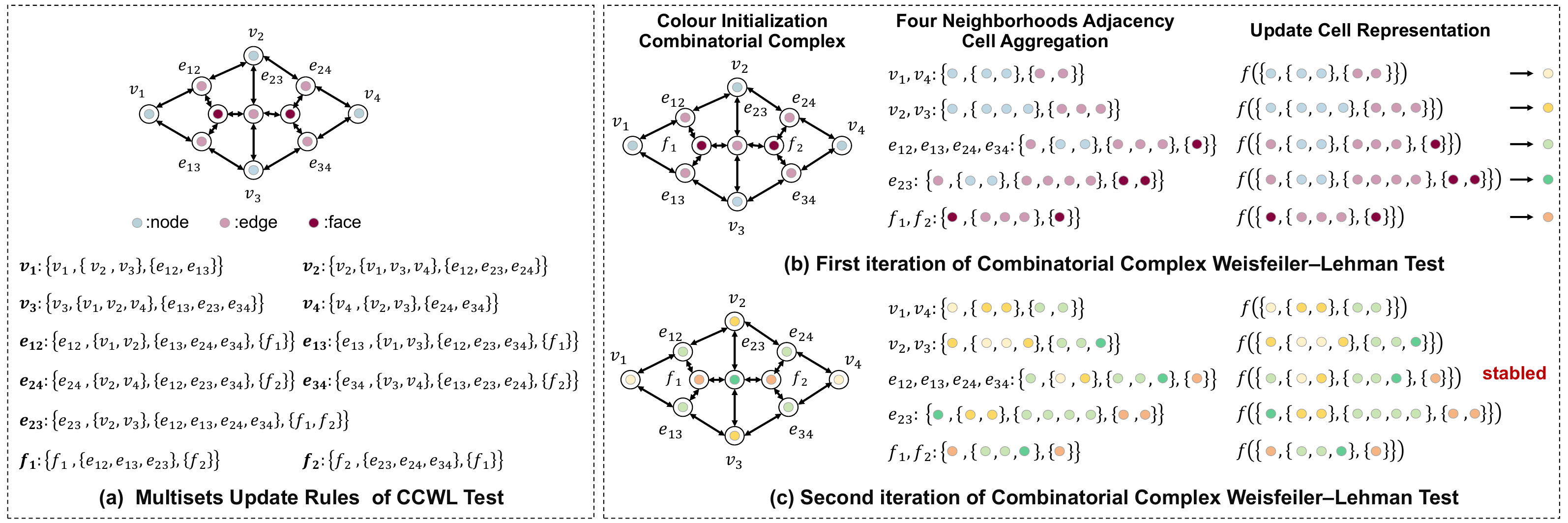}
 \caption{Illustration of Combinatorial Complex Weisfeiler--Leman test. CCWL is performed on a combinatorial complex, which consists of four nodes $v_1,v_2,v_3,v_4$, five edges $e_{12},e_{13},e_{23},e_{24},e_{34}$ and two faces $f_{123}=\{e_{12},e_{13},e_{23}\}$, $f_{234}=\{e_{23},e_{24},e_{34}\}$. At the initialization, 0-cells (nodes) $c_{0}^{(0)}=\{\statebullet\}$, 1-cells (edges) $c_{1}^{(0)}=\{\statepullet\}$, 2-cells (faces) $c_{2}^{(0)}=\{\staterellet\}$. CCWL stabilizes at iteration $t = 2$. }
 \label{fig1}
\end{figure*}

\begin{Definition} 
\label{defb3}
Let $c$ and $d$ be two coloring functions defined on cells of combinatorial complexes. For a coloring \(d\), let $ d^{\mathcal{CC}}:=\{\{d_\sigma^{\mathcal{CC}}\mid \sigma\in\mathcal C\}\}.
$
We define that $c$ refines $d$, denoted by $c\sqsubseteq d$, if for any two
combinatorial complexes $\mathcal{CC}_1,\mathcal{CC}_2$ and any cells $\sigma \in \mathrm{C}_1 $, $\tau \in \mathrm{C}_2 $ , $c_{\sigma}^{\mathcal{CC}_1} = c_{\tau}^{\mathcal{CC}_2}$ implies $d_{\sigma}^{\mathcal{CC}_1} = d_{\tau}^{\mathcal{CC}_2}$ .If both $c\sqsubseteq d$ and $d\sqsubseteq c$ hold, then $c$ and $d$ are equivalent, denoted by $c\equiv d$.
\end{Definition}

\begin{Lemma} \label{lemma9}
 Given two combinatorial complexes $\mathcal{CC}_1$($\mathrm{S}_1,$ $\mathrm{C}_1$,${rk}_1)$, $ \mathcal{CC}_2(\mathrm{S}_2,\mathrm{C}_2,{rk}_2)$, and let 
$A\subseteq\mathrm C_1$ and $B\subseteq\mathcal C_2$.
Let $c$ and $d$ be two colorings such that $c\sqsubseteq d$. If $\{\{d_{\sigma}^{\mathcal{CC}_1} |\sigma\in A\}\}$$ \ne \{\{d_{\tau}^{\mathcal{CC}_2} | \tau\in B\}\}$, then $\{\{c_{\sigma}^{\mathcal{CC}_1} |\sigma \in A\}\} $$\neq \{\{c_{\tau}^{\mathcal{CC}_2} | \tau \in B\}\}$.
\end{Lemma}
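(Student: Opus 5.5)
I will prove the statement by contraposition. Assume $\{\{c_{\sigma}^{\mathcal{CC}_1}\mid\sigma\in A\}\}=\{\{c_{\tau}^{\mathcal{CC}_2}\mid\tau\in B\}\}$. I will then show that the corresponding multisets of $d$-colors coincide as well. Equality of finite multisets means that $|A|=|B|$. It also means there is a bijection $\pi:A\to B$ with $c_{\sigma}^{\mathcal{CC}_1}=c_{\pi(\sigma)}^{\mathcal{CC}_2}$ for every $\sigma\in A$. To build $\pi$, I partition $A$ and $B$ into the preimages of each $c$-color value. Each color occurs with the same multiplicity on both sides, so I can match each class in $A$ bijectively with the corresponding class in $B$ and glue these matchings together.

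Next I apply the refinement hypothesis $c\sqsubseteq d$ from Definition~\ref{defb3} to each matched pair $(\sigma,\pi(\sigma))$, with $\sigma\in\mathrm C_1$ and $\pi(\sigma)\in\mathrm C_2$. Since $c_{\sigma}^{\mathcal{CC}_1}=c_{\pi(\sigma)}^{\mathcal{CC}_2}$, the hypothesis yields $d_{\sigma}^{\mathcal{CC}_1}=d_{\pi(\sigma)}^{\mathcal{CC}_2}$. It follows that $\pi$ is a bijection from $A$ to $B$ that preserves $d$-colors. Hence $\{\{d_{\sigma}^{\mathcal{CC}_1}\mid\sigma\in A\}\}=\{\{d_{\tau}^{\mathcal{CC}_2}\mid\tau\in B\}\}$, which contradicts the assumption of the lemma, and the claim follows.

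An equivalent way to organize the argument uses the refinement relation to define a function $f$ on the set of $c$-colors that occur. For each such color $x$, set $f(x)$ to be the common $d$-color of all cells with $c$-color $x$. This is well defined precisely because $c\sqsubseteq d$ holds across both complexes, so the same $f$ works for $\mathcal{CC}_1$ and $\mathcal{CC}_2$. Each $d$-multiset is then the pushforward of the corresponding $c$-multiset under $f$. Since equal multisets have equal pushforwards, the contrapositive is immediate.

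The only point needing care is that the definition of $\sqsubseteq$ quantifies over cells in different complexes, and not just within a single complex. This is exactly what lets $f$ be defined uniformly on $A\cup B$. I expect no real obstacle beyond stating this correctly, together with the implicit finiteness of $A$ and $B$, which makes the multiset matching argument valid.
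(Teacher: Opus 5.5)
Your proposal is correct and follows essentially the same route as the paper. Both argue by contraposition, extract a $c$-color-preserving bijection $A\to B$ from the equality of the $c$-multisets, and then apply $c\sqsubseteq d$ to each matched pair to see that the bijection also preserves $d$-colors; your pushforward formulation is only a repackaging of that same idea.
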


\begin{proof}
% We prove the contrapositive. 
Assume $
\left\{\!\left\{c_{\sigma}^{\mathcal{CC}_1}\mid \sigma\in \mathrm C_1\right\}\!\right\}
$=$
\left\{\!\left\{c_{\tau}^{\mathcal{CC}_2}\mid \tau\in\mathrm C_2\right\}\!\right\}
$. 
We suppose two cells $\sigma \in \mathrm{C}_1, \tau \in \mathrm{C}_2$, the multisets of $c$-colorings are 
$
\left\{\!\left\{ c_{\sigma}^{\mathcal{CC}_1} \mid \sigma \in A\right\}\!\right\}$=$\left\{\!\left\{ c_{\tau}^{\mathcal{CC}_2} \mid \tau \in B \right\}\!\right\}$. 
This means there exists a bijection $\phi: A\to B$ such that for all $\sigma \in A$, 
$
c^{\mathcal{CC}_1}_\sigma$=$c^{\mathcal{CC}_2}_{\phi(\sigma)}.
$
Since $c \sqsubseteq d$, we have:
$
c^{\mathcal{CC}_1}_\sigma$=$c^{\mathcal{CC}_2}_{\phi(\sigma)}$, then $d^{\mathcal{CC}_1}_\sigma$=$d^{\mathcal{CC}_2}_{\phi(\sigma)}$.
Thus, for $\forall \sigma \in A$, $d^{\mathcal{CC}_1}_\sigma$=$d^{\mathcal{CC}_2}_{\phi(\sigma)}$, which implies that the multisets of $d$-colors are also equal:
$
\left\{\!\left\{ d_{\sigma}^{\mathcal{CC}_1} \mid \sigma \in A \right\}\!\right\}$=$\left\{\!\left\{ d_{\tau}^{\mathcal{CC}_2} \mid \tau \in B \right\}\!\right\}.
$
This proves the contrapositive, and hence the claim holds.
\end{proof}

\textbf{Remark.}
The relation of two colorings $c\sqsubseteq d$ means that the partition induced by $c$ is finer
than the partition induced by $d$. Equivalently, any distinction made by $d$ is
also made by $c$. Hence, $c$ is at least as discriminative as $d$.

This Lemma further yields a corollary, showing that differences in the coloring function emerge at a refined level.

\begin{Corollary}
\label{Corollary11}
Consider two colorings functions $c, d $ such that $c \sqsubseteq d$. For two combinatorial complexes $\mathcal{CC}_1, \mathcal{CC}_2$ , if $d^{\mathcal{CC}_1} \ne d^{\mathcal{CC}_2}$ , then $c^{\mathcal{CC}_1} \ne c^{\mathcal{CC}_2}$.
\end{Corollary}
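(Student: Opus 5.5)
The corollary is the special case of Lemma~\ref{lemma9} in which the subsets are the whole cell sets. By Definition~\ref{defb3}, $d^{\mathcal{CC}_i}$ denotes the multiset $\{\{d_\sigma^{\mathcal{CC}_i}\mid \sigma\in\mathrm C_i\}\}$, and likewise for $c$. So I will set $A:=\mathrm C_1$ and $B:=\mathrm C_2$ in Lemma~\ref{lemma9}.

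With these choices, the hypothesis $d^{\mathcal{CC}_1}\neq d^{\mathcal{CC}_2}$ is exactly the premise
\[
\{\{d_{\sigma}^{\mathcal{CC}_1}\mid\sigma\in A\}\}\neq\{\{d_{\tau}^{\mathcal{CC}_2}\mid\tau\in B\}\}.
\]
Since $c\sqsubseteq d$ is assumed, the lemma gives $\{\{c_{\sigma}^{\mathcal{CC}_1}\mid\sigma\in A\}\}\neq\{\{c_{\tau}^{\mathcal{CC}_2}\mid\tau\in B\}\}$, which is $c^{\mathcal{CC}_1}\neq c^{\mathcal{CC}_2}$.

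To keep the argument self-contained, I will also recall the contrapositive mechanism directly. Suppose $c^{\mathcal{CC}_1}=c^{\mathcal{CC}_2}$. Then there is a bijection $\phi:\mathrm C_1\to\mathrm C_2$ with $c_\sigma^{\mathcal{CC}_1}=c_{\phi(\sigma)}^{\mathcal{CC}_2}$ for every $\sigma\in\mathrm C_1$. Applying $c\sqsubseteq d$ cell by cell gives $d_\sigma^{\mathcal{CC}_1}=d_{\phi(\sigma)}^{\mathcal{CC}_2}$, so the same $\phi$ matches the $d$-multisets and $d^{\mathcal{CC}_1}=d^{\mathcal{CC}_2}$.

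I expect no real obstacle. The only point needing care is that equality of two multisets is equivalent to the existence of a color-preserving bijection between the underlying index sets; this requires $|\mathrm C_1|=|\mathrm C_2|$, which multiset equality itself guarantees. The refinement relation in Definition~\ref{defb3} is quantified over cells of different complexes, so it applies directly to the matched pairs $(\sigma,\phi(\sigma))$.
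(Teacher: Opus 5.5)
Your proposal is correct and follows the paper's proof: the paper obtains the corollary by setting $A=\mathrm C_1$ and $B=\mathrm C_2$ in Lemma~\ref{lemma9}, which is exactly your specialization. Your extra contrapositive paragraph only repeats the bijection argument from the proof of that lemma for the whole cell sets, so it is correct but not needed.
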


\begin{proof}
If $A=\mathrm C_1$ and $B=\mathrm C_2$ in
Lemma~\ref{lemma9}. 
\end{proof}

\subsection{Generalization of CCWL to Four Combinatorial Structures}

Recent studies have extended
WL-type analyses to hypergraphs, simplicial complexes, and cell
complexes~\cite{feng2024hypergraph,bodnar2021weisfeiler,eitantopological}.
However, these refinements are usually defined on different structural domains
with different neighborhood systems. We show that CCWL provides a common
formulation by lifting these structures into combinatorial complexes.

% \begin{Definition}[\textbf{Combinatorial Lifting Map}] \label{def:lifting}
% Let $\mathcal{X}$ be a class of combinatorial structures, $\mathcal{G}, \mathcal{H}, \mathcal{S}, \mathcal{K}$ be a graph, hypergraph, simplicial complex and cellular complex, respectively. A lifting map $f_{\mathcal X}:\mathcal X\to\mathcal{CC}$ is called 
% WL-compatible if it satisfies two properties:
% (i) it preserves isomorphism, i.e.,
% $X_1\cong X_2 \Longleftrightarrow f_{\mathcal X}(X_1)\cong
% f_{\mathcal X}(X_2)$;
% (ii) the WL-type refinement on $X$ can be simulated by CCWL on
% $f_{\mathcal X}(X)$ through the corresponding lifted cells and
% neighborhood relations.
% \end{Definition}

\begin{Definition}[\textbf{Combinatorial Lifting Map}]
\label{def:lifting}
Let $\mathcal{X}$ be a class of combinatorial structures, $\mathcal{G}, \mathcal{H}, \mathcal{S}, \mathcal{K}$ be a graph, hypergraph, simplicial complex and cellular complex, respectively. A combinatorial
lifting map is a transformation
$f_{\mathcal{X}}:\mathcal{X}\rightarrow \mathsf{CC}$ that maps each
$X\in\mathcal{X}$ to a combinatorial complex $f_{\mathcal{X}}(X)$. We call
$f_{\mathcal{X}}$ {faithful} if it satisfies: (i) {isomorphism
preservation and reflection}, i.e., for any $X_1,X_2\in\mathcal{X}$,
$
X_1\cong X_2
\Longleftrightarrow
f_{\mathcal{X}}(X_1)\cong f_{\mathcal{X}}(X_2)$ 
and (ii) {incidence preservation}, i.e., the structural units involved in
the WL-type refinement of $X$ are mapped to cells of corresponding ranks or
types, and their incidence relations are preserved by the boundary or
co-boundary relations in $f_{\mathcal{X}}(X)$.
\end{Definition}

Example of lifting mapping to CC: Let 
$f(\mathcal{\mathcal{X}})$ be the clique-based combinatorial complex, where each node in $\mathcal{X}$ maps to a 0-cell, each edge maps to a 1-cell, each $(k+1)$-clique maps to a k-dimensional cell with rank $k$. 
More broadly, combinatorial complexes offer a common representation for both set-type relations, such as graphs and hypergraphs, and part-whole relations, such as simplicial and cellular complexes. We detail these two cases below.

\begin{figure*}[htbp]
 \centering
\includegraphics[width=0.95\linewidth]{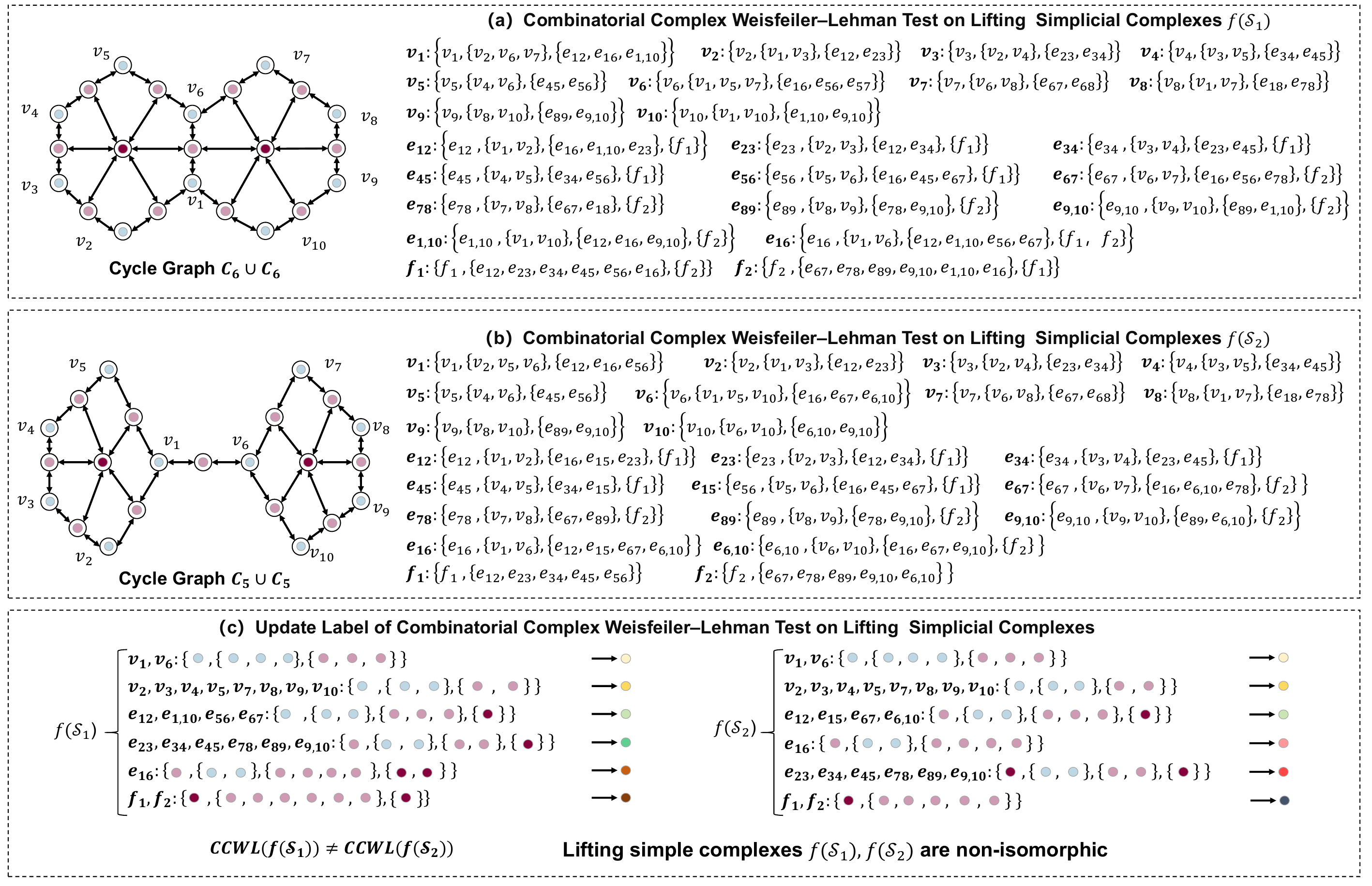}
 \caption{Two simplicial complexes are non-isomorphic ($C_6\cup C_6$: two 6-cycles glued along an edge($e_{16}$), $C_5\cup C_5$: two 5-cycles connected by a bridge edge($e_{16}$)). Simplicial Weisfeiler--Leman cannot distinguish them, but CCWL can distinguish the lifting combinatorial complexes $f(S_1), f(S_2)$.}
 \label{fig004}
\end{figure*}

\refstepcounter{case} 
\textbf{Case~\thecase~(Graphs and Hypergraphs)}
\label{case:graphs-hypergraphs}
Graphs can be viewed as a special case of hypergraphs when the scale of hyperedges degrades to 2. Under the lifting operation, vertices are mapped to $0$-cells, while edges or hyperedges are mapped to higher-rank cells incident to their connected vertices. Hence, the standard graph neighborhood of a vertex can be represented by upper adjacency among $0$-cells through incident edges. For hypergraphs, the two-stage node--hyperedge--node refinement~\cite{zhangimproved,bouritsas2022improving} corresponds to upper adjacency among nodes, it can be rewritten as
% In case of the scale of hyperedges degrading to 2, the hypergraphs can be generalized into graphs.
% Specifically, the node--hyperedge--node update can be represented by upper adjacency:
\begin{equation}
\small
\begin{aligned}
& c^{(t+1)}_v = \mathrm{HASH}\left(c^{(t)}(v), \left\{ c^{(t)}(\tau) \mid \tau \in \mathcal{N}_e(v) \right\}\right) \\ 
\Rightarrow \quad  &c^{(t+1)}_{\mathcal{N}_{\uparrow}(v)} = \mathrm{HASH}\left(c^{(t)}(v), \left\{ (c^{(t)}(\tau), c^{(t)}(\delta)) \right\} \right),
\end{aligned}
\end{equation}
where $\tau$ denotes a hyperedge incident to node $v$ ,$\tau \in \mathcal{N}_{\uparrow}(v), \delta \in \mathcal{C}(v, \tau) $, and $\delta$ refers to a neighboring node. Similarly, the
hyperedge-to-node-to-hyperedge update can be represented by lower adjacency can be expressed as:
\begin{equation}
\small
\begin{aligned}
 & c^{(t+1)}_e = \mathrm{HASH}\left(c^{(t)}(e), \left\{ c^{(t)}(\tau) \mid \tau \in \mathcal{E}(e) \right\}\right)  \\ 
\Rightarrow \quad & c^{(t+1)}_{\mathcal{N}_{\downarrow}(e)} = \mathrm{HASH} \left( c^{(t)}(e), \left\{ (c^{(t)}(\tau), c^{(t)}(\delta)) \right\} \right),
\end{aligned}
\end{equation}
where $\tau \in N_{\downarrow}(e), \delta \in \mathcal{B}(e, \tau)$, $\mathcal{E}(e)$ denotes the set of hyperedges adjacent to $e$ via a shared node, which offers a view of information propagation in graphs and hypergraphs.

\refstepcounter{case} 
\textbf{Case~\thecase~(Simplicial and Cellular complexes)}. 
\label{case:simplicial-cellular}
Simplicial and cell complexes encode part-whole relations through cells of different ranks and their incidence relations~\cite{bodnar2021weisfeiler,eitantopological,wu2023simplicial}. Under the combinatorial lifting, each simplex or cell is mapped to a cell with the same rank, and the boundary incidence relation is preserved. Hence, the boundary, co-boundary, lower-adjacent, and upper-adjacent refinements used in simplicial and cellular WL-type procedures are represented by the four structural neighborhoods of CCWL. For example, consider $2$-simplices (faces)
$f_1$=$\{v_1,v_2,v_3\}$,$f_2$=$\{v_2,v_3,v_4\}, $
which share the common edge $e_{23}=\{v_2,v_3\}$. Their boundary sets are
$
\mathcal{B}(f_1)$=$\{e_{12},e_{23},e_{13}\}$, $
\mathcal{B}(f_2)$=$\{e_{23},e_{24},e_{34}\}.
$
The shared edge $e_{23}$ induces the lower adjacency between the two faces, since 
$\mathcal{B}(f_1)\cap \mathcal{B}(f_2)=\{e_{23}\}\neq\emptyset,$ $f_1\in\mathcal{N}_{\downarrow}(f_2).$ Similarly, the co-boundary of the shared edge satisfies $\mathcal{C}(e_{23})=\{f_1,f_2\},$ showing that $e_{23}$ connects the two incident faces through a cross-rank part-whole relation. At lower ranks, two vertices are upper-adjacent if they belong to a common edge, e.g., $v_2\in\mathcal{N}_{\uparrow}(v_3)$ through the co-boundary cell $e_{23},$ and two edges are upper-adjacent if they belong to a common face, e.g., $e_{12}\in\mathcal{N}_{\uparrow}(e_{13})$ through the co-boundary cell $f_1$. Thus, the neighborhood systems used by simplicial and cellular refinements can be viewed as specific instances of the four-neighborhood CCWL refinement.

\begin{Theorem}
\label{theorem06}
Let $\mathcal{G}, \mathcal{H}, \mathcal{S}, \mathcal{K}$ be a graph, hypergraph, simplicial complex and cellular complex, respectively. There exists a mapping from each of these structures into a combinatorial complex $\mathcal{CC}$, the CCWL test can simulate the 1-WL, hypergraph WL, simplicial-WL and cellular-WL tests. 
\end{Theorem}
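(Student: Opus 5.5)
For each of the four domains, we fix a concrete faithful lifting $f_{\mathcal X}$ in the sense of Definition~\ref{def:lifting}, and then show that CCWL on the lifted complex refines the domain-specific WL coloring. Formally, for every $t$ there is a step $s(t)$ such that
\[
c^{(s(t))}_{\mathrm{CCWL}} \sqsubseteq c^{(t)}_{\mathcal X}
\]
on the lifted cells, in the sense of Definition~\ref{defb3}. Corollary~\ref{Corollary11} then turns this into the statement of the theorem: whenever the domain WL test separates $X_1$ and $X_2$, CCWL separates $f_{\mathcal X}(X_1)$ and $f_{\mathcal X}(X_2)$.

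The liftings are as follows.
\begin{itemize}
\item \emph{Graphs:} vertices become $0$-cells and edges become $1$-cells.
\item \emph{Hypergraphs:} vertices become $0$-cells and each hyperedge $e$ becomes a cell of rank $1$ with $\mathcal B(e)$ equal to its vertex set. This matches Case~\ref{case:graphs-hypergraphs}, taking rank by incidence level rather than cardinality so that $\prec$ is rank-adjacent.
\item \emph{Simplicial and cell complexes:} each simplex or cell keeps its dimension as rank, and $\prec$ is the face/boundary relation, as in Case~\ref{case:simplicial-cellular}.
\end{itemize}
Isomorphism preservation and reflection are checked directly in each case. The cells and incidences are literally the original data, so an isomorphism of one structure induces a rank- and $\prec$-preserving bijection of the other and conversely. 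Such a bijection then automatically preserves $\mathcal N_\uparrow$ and $\mathcal N_\downarrow$, since these are defined from $\mathcal B$ and $\mathcal C$.

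The core step is an induction on $t$, carried out separately for each domain, with the claim that the CCWL color of a lifted cell determines the domain WL color.

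\textbf{Graphs.} For a $0$-cell $v$ with $t\mapsto t$, the 1-WL input multiset $\{\{c(u): u\sim v\}\}$ can be read off from $c_{\mathcal N_\uparrow}(v)=\{\{(c(u),c(e))\}\}$ by projecting onto the first coordinate. Since the map from the CCWL input tuple to the 1-WL input is a function, equal CCWL colors give equal 1-WL colors by the induction hypothesis and injectivity of $\mathrm{HASH}$.

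\textbf{Simplicial and cell complexes.} SWL and CWL update using exactly the boundary, co-boundary, and upper-adjacency multisets, the latter paired with the shared coface. Each is a component of the CCWL tuple. Where CWL omits lower adjacency, we simply discard that component. Again $s(t)=t$.

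\textbf{Hypergraphs.} The two-stage node$\to$hyperedge$\to$node refinement needs care. The hyperedge color at stage $t$ is a function of its vertex multiset, which is $c_{\mathcal B}(e)$. The node color then uses $c_{\mathcal C}(v)$. So one hypergraph-WL round is simulated by two CCWL rounds, $s(t)=2t$. The induction hypothesis is stated jointly for $0$-cells and rank-$1$ cells, and each CCWL round lets information flow one incidence level. The same two-step bookkeeping works for hyperedge-side updates via $\mathcal N_\downarrow$.

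I expect the main obstacle to be making ``simulate'' precise uniformly. The domain WL tests differ in their initial colorings and in whether they use pairs $(c_\tau,c_\delta)$ or plain neighbor colors. The initial CCWL coloring must therefore be chosen at least as fine as the domain's, for instance rank-only coloring versus any vertex features, which can be absorbed into $c^{(0)}$.

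A subtler point is the final comparison. The domain WL test compares multisets over vertices, or over all cells, whereas CCWL compares multisets over all cells of the lift. Lemma~\ref{lemma9} handles this. I would apply it with $A,B$ equal to the rank-$0$ cells, or the cells of the relevant ranks. Because CCWL colors always encode rank through $c^{(0)}$, a difference on a rank-restricted sub-multiset forces a difference on the whole multiset.

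Finally, I would note that the simulation is only one-directional. CCWL may be strictly stronger, as in Fig.~\ref{fig004}, and this is consistent with the theorem.
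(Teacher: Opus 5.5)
Your proposal is correct and follows essentially the same route as the paper: lift each structure, prove by induction over iterations that the CCWL coloring refines the domain-specific WL coloring on the lifted cells, and conclude with Lemma~\ref{lemma9} (on rank-restricted cell sets) and Corollary~\ref{Corollary11}. The one substantive difference is the hypergraph case, where you simulate each two-stage round by two CCWL rounds through $c_{\mathcal B}$ and $c_{\mathcal C}$ ($s(t)=2t$), whereas the paper identifies the stages directly with $\mathcal N_\uparrow$ and $\mathcal N_\downarrow$ in lockstep; your lagged version is the more careful one, because with rank-only initialization the lockstep refinement $c^{(t)}_{\mathrm{CCWL}}\sqsubseteq c^{(t)}_{\mathrm{WL}}$ already fails at $t=1$ (compare a vertex lying in hyperedges of sizes $2,4$ with one lying in hyperedges of sizes $3,3$).
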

 
\begin{proof}
For each structure, we use the combinatorial lifting in
Definition~\ref{def:lifting}. Vertices are lifted to $0$-cells, edges or
hyperedges, simplices are lifted to incident higher-rank cells. In Case~\ref{case:graphs-hypergraphs}, graph and hypergraph WL refinements are represented in CCWL through lifted incidence relations: graph vertex neighborhoods correspond to upper-adjacent $0$-cells, hypergraph node--hyperedge--node and hyperedge--node--hyperedge refinements correspond to upper adjacency among nodes and lower adjacency among hyperedges, respectively. In Case~\ref{case:simplicial-cellular}, simplicial- and cellular-WL refinements are embedded into CCWL, since the lifting preserves cells, ranks, and boundary maps, making their boundary, co-boundary, lower-adjacent, and upper-adjacent color multisets coincide with the corresponding CCWL neighborhood multisets. At every iteration, the information used by each domain-specific
WL update is contained in the information used by the CCWL update. 
By the
injectivity of $\mathrm{HASH}$ and induction over iterations, the CCWL coloring
refines the corresponding domain-specific WL coloring, i.e.,$
c_{\mathrm{CCWL}}^{(t)}\sqsubseteq c_{\mathrm{WL}}^{(t)}$. 
Therefore, by Lemma~\ref{lemma9} on the relevant lifted cell subsets, and by
Corollary~\ref{Corollary11} when the comparison is over the whole cell set, any
distinction made by 1-WL, hypergraph WL, simplicial-WL, or cellular-WL is also
made by CCWL on the lifted combinatorial complex. Hence, CCWL simulates these
WL-type tests.
\end{proof}

We further provide representative non-isomorphic pairs to illustrate the
discriminative gap between existing WL-type refinements and CCWL. Under the
simplicial-WL refinement, message aggregation is restricted to the original simplicial incidence relations. Starting from rank-wise uniform initial colors, pairs such as $C_k\cup C_k$ and $C_{2k}$ $(k\geq 4)$, $C_{k-1}\cup C_{k-1}$ with a bridge edge and $C_{k-1}\cup C_{k-1}$ with a shared edge ($k\geq 5$), induce identical stable color histograms over vertices and edges. Hence, for such a pair
$(\mathcal S_1,\mathcal S_2)$,
$
\left\{\!\left\{
c_{\mathrm{SWL}}^{(t)}(\sigma)\mid \sigma\in\mathcal S_1
\right\}\!\right\}
$=$
\left\{\!\left\{
c_{\mathrm{SWL}}^{(t)}(\tau)\mid \tau\in\mathcal S_2
\right\}\!\right\},$
for all iterations before stabilization, and simplicial-WL fails to distinguish
the two structures. 
After applying the cyclic lifting $f(\cdot)$, the structures are represented as
combinatorial complexes whose lifted cells encode closed cycle patterns. CCWL can then distinguish the two lifted complexes by aggregating bridge-aware lower- and upper-adjacent information. At some finite iteration
$t$, $
\{\{
c_{\mathrm{CCWL}}^{(t)}(\sigma)\mid
\sigma\in\mathcal C(f(\mathcal S_1))
\}\}
$$\neq$$
\{\{
c_{\mathrm{CCWL}}^{(t)}(\tau)\mid
\tau\in\mathcal C(f(\mathcal S_2))
\}\}$. 
Fig~\ref{fig004} illustrates one such separation example, and additional
counterexamples are provided in SI~Sec.II.

The full CCWL refinement uses four structural neighborhoods: boundary, co-boundary, lower adjacency, and upper adjacency. These neighborhoods capture different aspects of a combinatorial complex, but they are not independent. For two same-rank cells $\sigma$ and $\tau$, recall the bridge sets $ \mathcal{B}(\sigma,\tau) := \mathcal{B}(\sigma)\cap\mathcal{B}(\tau) $, $\mathcal{C}(\sigma,\tau) := \mathcal{C}(\sigma)\cap\mathcal{C}(\tau)$. 
Lower adjacency $\mathcal{B}(\sigma,\tau)$ is induced by shared boundary cells $\mathcal{B}(\sigma)\cap\mathcal{B}(\tau)$, whereas upper adjacency $\mathcal{C}(\sigma,\tau)$ is induced by shared co-boundary cells $\mathcal{C}(\sigma)\cap\mathcal{C}(\tau)$ (i.e., $c_{\mathcal{N}_\uparrow}^{(t)} \sqsubseteq c_{\mathcal{C}}^{(t)}$), and lower adjacent colorings encompass boundary colorings (i.e., $c_{\mathcal{N}_\downarrow}^{(t)} \sqsubseteq c_{\mathcal{B}}^{(t)}$).
% Hence, part of the cross-rank incidence information may be recoverable from same-rank adjacency relations. 
we formally derive the neighborhood coverage conditions as

\begin{Definition}[\textbf{Coverage Condition}]
\label{def:coverage}
Let $\sigma\in\mathcal{C}$ be a $k$-cell in a combinatorial complex. 
We say that $\sigma$ is boundary-covered if every boundary cell of
$\sigma$ is shared with another $k$-cell; that is, for every
$\eta\in\mathcal{B}(\sigma)$, there exists $\tau\in\mathcal{C}$ such that
$\tau\neq\sigma$, $\mathrm{rk}(\tau)=\mathrm{rk}(\sigma)$, and
$\eta\in\mathcal{B}(\sigma)\cap\mathcal{B}(\tau)$. 
Similarly, $\sigma$ is co-boundary-covered if every co-boundary cell of
$\sigma$ is shared with another $k$-cell; that is, for every
$\gamma\in\mathcal{C}(\sigma)$, there exists $\tau\in\mathcal{C}$ such that $\tau\neq\sigma$, $\mathrm{rk}(\tau)=\mathrm{rk}(\sigma)$, and
$\gamma\in\mathcal{C}(\sigma)\cap\mathcal{C}(\tau)$.
\end{Definition} 

The coverage condition rules out boundary or co-boundary cells that are visible  through cross-rank incidence. Under this condition, the boundary and co-boundary neighborhoods can be reconstructed from their corresponding bridge sets.

\begin{Lemma}
\label{lemma:equivalence_coverage}
Let $\sigma$ be a $k$-cell in a combinatorial complex $\mathcal{CC}$. 
If $\sigma$ is both boundary-covered and co-boundary-covered, then 
\begin{align} \small
\label{covercond01}
\mathcal{B}(\sigma) & = \bigcup_{\tau \in \mathcal{N}_{\downarrow}(\sigma)} \mathcal{B}(\sigma, \tau)=\bigcup_{\tau \in \mathcal{N}_{\downarrow}(\sigma)} \mathcal{B}(\sigma)\cap \mathcal{B}(\tau),  \\ 
\label{covercond02}
\mathcal{C}(\sigma) & = \bigcup_{\tau \in \mathcal{N}_{\uparrow}(\sigma)} \mathcal{C}(\sigma, \tau) = \bigcup_{\tau \in \mathcal{N}_{\uparrow}(\sigma)} \mathcal{C}(\sigma) \cap \mathcal{C}(\tau).
\end{align} 

\end{Lemma}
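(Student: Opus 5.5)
The plan is to prove each of the two set identities by double inclusion, working directly from Definition~\ref{cellneighhood} and Definition~\ref{def:coverage}. The two identities are formally dual, since boundary and co-boundary exchange roles, as do $\mathcal{N}_{\downarrow}$ and $\mathcal{N}_{\uparrow}$. We will therefore argue \eqref{covercond01} in detail and obtain \eqref{covercond02} by the same argument with $\mathcal{B}$ replaced by $\mathcal{C}$ and lower adjacency replaced by upper adjacency.

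\textbf{Inclusion ``$\supseteq$''.} This direction needs no hypothesis. For every $\tau\in\mathcal{N}_{\downarrow}(\sigma)$ we have $\mathcal{B}(\sigma,\tau)=\mathcal{B}(\sigma)\cap\mathcal{B}(\tau)\subseteq\mathcal{B}(\sigma)$, so the union over all $\tau\in\mathcal{N}_{\downarrow}(\sigma)$ is also contained in $\mathcal{B}(\sigma)$. The second equality in \eqref{covercond01} is simply the definition of the bridge set $\mathcal{B}(\sigma,\tau)$ from Definition~\ref{def:neighborhood_color_multisets}.

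\textbf{Inclusion ``$\subseteq$''.} This direction uses boundary-coverage.
\begin{enumerate}
\item Fix $\eta\in\mathcal{B}(\sigma)$. By Definition~\ref{def:coverage}, there is a cell $\tau\neq\sigma$ with $\mathrm{rk}(\tau)=\mathrm{rk}(\sigma)$ and $\eta\in\mathcal{B}(\sigma)\cap\mathcal{B}(\tau)$.
\item This intersection is therefore nonempty, so $\tau$ meets every clause in the definition of $\mathcal{N}_{\downarrow}(\sigma)$: it is distinct from $\sigma$, has the same rank, and shares a boundary cell with $\sigma$. Hence $\tau\in\mathcal{N}_{\downarrow}(\sigma)$.
\item Consequently $\eta\in\mathcal{B}(\sigma,\tau)$ for some $\tau\in\mathcal{N}_{\downarrow}(\sigma)$, so $\eta$ lies in the union.
\end{enumerate}

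\textbf{Co-boundary identity.} For \eqref{covercond02}, we take $\gamma\in\mathcal{C}(\sigma)$ and use co-boundary-coverage to obtain a witness $\tau$. This $\tau$ lies in $\mathcal{N}_{\uparrow}(\sigma)$ because $\mathcal{C}(\sigma)\cap\mathcal{C}(\tau)\ni\gamma$ is nonempty. The reverse inclusion again follows from $\mathcal{C}(\sigma,\tau)\subseteq\mathcal{C}(\sigma)$.

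\textbf{Main obstacle.} There is no real technical obstacle. The only point needing care is checking that the coverage witness $\tau$ satisfies exactly the membership conditions of $\mathcal{N}_{\downarrow}(\sigma)$ or $\mathcal{N}_{\uparrow}(\sigma)$. This works because the coverage definition was phrased with the same clauses ($\tau\neq\sigma$, equal rank, nonempty shared boundary or co-boundary).

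We will also note the degenerate cases. If $\mathcal{B}(\sigma)=\emptyset$, for instance when $\sigma$ is a $0$-cell, or if $\mathcal{C}(\sigma)=\emptyset$, for instance when $\sigma$ is a top-rank cell, then the coverage condition holds vacuously. Both sides of the corresponding identity are then empty, since every bridge set is a subset of an empty set, so the identity still holds.
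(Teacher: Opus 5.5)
Your proposal is correct and follows the paper's proof: double inclusion, with the trivial direction coming from $\mathcal{B}(\sigma,\tau)\subseteq\mathcal{B}(\sigma)$, and the nontrivial direction coming from the coverage witness $\tau$, which meets every clause of $\mathcal{N}_{\downarrow}(\sigma)$; the co-boundary identity is handled the same way. Your remark on the degenerate cases $\mathcal{B}(\sigma)=\emptyset$ or $\mathcal{C}(\sigma)=\emptyset$ is a harmless addition the paper leaves implicit.
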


\begin{proof}
We prove the lemma with mutual inclusion (i.e., $\supseteq$, $\subseteq$). The bridge sets are the shared cells between two adjacent $k$-cells: $\mathcal{B}(\sigma, \tau) = \mathcal{B}(\sigma) \cap \mathcal{B}(\tau)$ and $\mathcal{C}(\sigma, \tau) = \mathcal{C}(\sigma) \cap \mathcal{C}(\tau)$. Assume cell $\sigma$ is a boundary-covered $k$-cell. 
($\supseteq$): Let $\eta \in \bigcup_{\tau \in \mathcal{N}_{\downarrow}(\sigma)} (\mathcal{B}(\sigma) \cap \mathcal{B}(\tau))$. By definition, there exists a lower adjacent cell $\tau \in \mathcal{N}_{\downarrow}(\sigma)$ such that $\eta \in \mathcal{B}(\sigma) \cap \mathcal{B}(\tau)$. It trivially follows that $\eta \in \mathcal{B}(\sigma)$. Thus, $\bigcup_{\tau \in \mathcal{N}_{\downarrow}(\sigma)} (\mathcal{B}(\sigma) \cap \mathcal{B}(\tau)) \subseteq \mathcal{B}(\sigma)$. 
($\subseteq$): Let $\eta \in \mathcal{B}(\sigma)$ be a boundary cell of $\sigma$. Because $\sigma$ satisfies the boundary-covered condition (Eq.~\eqref{covercond01}), there exists another $k$-cell $\tau \neq \sigma$ such that $\eta \in \mathcal{B}(\sigma) \cap \mathcal{B}(\tau)$. This non-empty intersection strictly fulfills the definition of lower adjacency, yielding $\tau \in \mathcal{N}_{\downarrow}(\sigma)$. Hence, $\eta \in \bigcup_{\tau \in \mathcal{N}_{\downarrow}(\sigma)} (\mathcal{B}(\sigma) \cap \mathcal{B}(\tau))$, establishing $\mathcal{B}(\sigma) \subseteq \bigcup_{\tau \in \mathcal{N}_{\downarrow}(\sigma)} (\mathcal{B}(\sigma) \cap \mathcal{B}(\tau))$. Both inclusions are satisfied, $\mathcal{B}(\sigma) = \bigcup_{\tau \in \mathcal{N}_{\downarrow}(\sigma)} \mathcal{B}(\sigma, \tau)$ holds.
For the co-boundary case, if $\gamma\in\mathcal{C}(\sigma)$ and $\sigma$ is co-boundary-covered, then there exists a $k$-cell $\tau\neq\sigma$ such that $ \gamma\in\mathcal{C}(\sigma)\cap\mathcal{C}(\tau)$. Hence $\tau\in\mathcal{N}_{\uparrow}(\sigma)$ and $\gamma\in\mathcal{C}(\sigma,\tau)$. Together with the trivial reverse inclusion, this proves Eq.~\eqref{covercond02}. This completes the proof. 
\end{proof}
 
% Assume $\sigma$ is a $k$-cell, co-boundary-covered.
% ($\supseteq$): Let $\gamma \in \bigcup_{\tau \in \mathcal{N}_{\uparrow}(\sigma)} \mathcal{C}(\sigma, \tau)$. This implies there exists an upper adjacent cell $\tau \in \mathcal{N}_{\uparrow}(\sigma)$ such that $\gamma \in \mathcal{C}(\sigma, \tau)$. Since $\mathcal{C}(\sigma, \tau) = \mathcal{C}(\sigma) \cap \mathcal{C}(\tau)$, it is evident that $\gamma \in \mathcal{C}(\sigma)$. Thus, $\bigcup_{\tau \in \mathcal{N}_{\uparrow}(\sigma)} \mathcal{C}(\sigma, \tau) \subseteq \mathcal{C}(\sigma)$. 
% ($\subseteq$): Let $\gamma \in \mathcal{C}(\sigma)$ be a co-boundary cell of $\sigma$. According to the co-boundary-covered condition (Definition \ref{def:coverage}), there exists another $k$-cell $\tau \neq \sigma$ such that $\gamma \in \mathcal{C}(\sigma) \cap \mathcal{C}(\tau)$. This non-empty intersection defines the upper adjacency between $\sigma$ and $\tau$, ensuring $\tau \in \mathcal{N}_{\uparrow}(\sigma)$. Because $\gamma \in \mathcal{C}(\sigma, \tau)$, it follows that $\gamma \in \bigcup_{\tau \in \mathcal{N}_{\uparrow}(\sigma)} \mathcal{C}(\sigma, \tau)$. Thus, $\mathcal{C}(\sigma) \subseteq \bigcup_{\tau \in \mathcal{N}_{\uparrow}(\sigma)} \mathcal{C}(\sigma, \tau)$.
% Both inclusions are satisfied, yielding the equality $\mathcal{C}(\sigma) = \bigcup_{\tau \in \mathcal{N}_{\uparrow}(\sigma)} \mathcal{C}(\sigma, \tau)$. 

\subsection{Combinatorial Complex Weisfeiler--Leman Framework} 
To further explore whether all four neighborhood color multisets are necessary to maintain the discriminative power of CCWL, and whether a simplified refinement rule can achieve the same expressive power, we analyze this redundancy by removing the boundary and co-boundary color multisets.
% To further explore whether all four neighborhood color multisets are necessary to maintain the discriminative power of CCWL, and whether a simplified refinement rule can achieve the same expressive power, we first analyze this redundancy by removing the shared-boundary color multisets while retaining boundary, lower adjacency, and upper adjacency information. Furthermore, removing the boundary color multisets provides the necessary neighborhood information.

\begin{Lemma}
\label{lemma12}
For combinatorial complexes satisfying the coverage conditions, 
 CCWL with $\mathrm{HASH} (c_\sigma^t$, $c_{\mathcal{B}}^t(\sigma)$, $c_{\mathcal{N}_{\downarrow}}^t(\sigma)$, $ c_{\mathcal{N}_{\uparrow}}^t(\sigma) )$ is as powerful as CCWL with the generalized update rule $\mathrm{HASH} (c^t(\sigma), c_{\mathcal{B}}^t(\sigma)$, $c_{\mathcal{C}}^t(\sigma)$, $c_{\mathcal{N}_{\downarrow}}^t(\sigma)$, $c_{\mathcal{N_{\uparrow}}}^t(\sigma) )$ .
\end{Lemma}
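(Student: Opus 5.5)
We prove two refinements, $a\sqsubseteq b$ and $b\sqsubseteq a$, where $a^{(t)}$ is the coloring produced by the reduced rule $\mathrm{HASH}(c^t(\sigma),c_{\mathcal B}^t(\sigma),c_{\mathcal N_\downarrow}^t(\sigma),c_{\mathcal N_\uparrow}^t(\sigma))$ and $b^{(t)}$ is the coloring produced by the full four-neighborhood CCWL rule of Definition~\ref{ccwldef}. Both refinements are proved by induction on $t$ in the sense of Definition~\ref{defb3}. Then Corollary~\ref{Corollary11} applied in both directions shows that the two tests distinguish exactly the same pairs of complexes.

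The direction $b\sqsubseteq a$ is immediate. The full update receives every argument of the reduced update, so equal full colors at step $t+1$ force equal reduced inputs at step $t$, by injectivity of $\mathrm{HASH}$ and the induction hypothesis. The initial colorings coincide, since both depend only on rank.

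For $a\sqsubseteq b$ we must show that the reduced data at step $t$ determines the co-boundary multiset $c^t_{\mathcal C}(\sigma)$. We use the co-boundary-covered condition through Lemma~\ref{lemma:equivalence_coverage}, Eq.~\eqref{covercond02}: every $\gamma\in\mathcal C(\sigma)$ occurs as a bridge $\delta$ in some pair $(c^t(\tau),c^t(\delta))$ of $c^t_{\mathcal N_\uparrow}(\sigma)$. However, $\gamma$ occurs once for each $\tau\in\mathcal N_\uparrow(\sigma)$ with $\gamma\in\mathcal C(\tau)$, i.e.\ $m_\gamma:=|\mathcal B(\gamma)\cap\{\text{rank-}k\}|-1$ times. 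Counting multiplicities is therefore the main obstacle. We will try two ways to handle it.

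\emph{First approach: recover $m_\gamma$ from colors.} After one iteration the color of $\gamma$ (under either coloring) encodes its boundary multiset $c_{\mathcal B}(\gamma)$, and hence the number of $k$-cells on its boundary, since ranks are encoded in the initial colors. So after a one-step shift, the multiset $\{\{c^t(\delta)\}\}$ read off from $c^t_{\mathcal N_\uparrow}(\sigma)$ is the multiset $c^t_{\mathcal C}(\sigma)$ with each element repeated $m_\gamma\ge1$ times. Coverage guarantees $m_\gamma\ge1$, so dividing each color class's count by the multiplicity encoded in its color recovers $c^t_{\mathcal C}(\sigma)$ exactly.

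We then formalize this as a lag argument. We prove $a^{(t+1)}\sqsubseteq b^{(t)}$, or alternatively $a^{(t)}\sqsubseteq b^{(t)}$ for $t\ge1$ after checking the base case directly. At the stable coloring any lag disappears, so the final partitions coincide and the two tests have equal distinguishing power.

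\emph{Second approach, if the lag bookkeeping becomes awkward.} We note that the stable colorings are what matter for the test verdict. It then suffices to show that the stable reduced coloring is a fixed point of the full update: the reduced color determines, via the bridge pairs, the colors and multiplicities of the co-boundary cells. This again uses Eq.~\eqref{covercond02} and the determination of $m_\gamma$ from $\gamma$'s own stable color.

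The boundary term needs no such work, since it is retained in the reduced rule. The remaining steps are routine uses of the injectivity of $\mathrm{HASH}$.
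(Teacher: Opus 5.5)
Your proof is correct. Its key step is the same as the paper's: under co-boundary coverage every $\gamma\in\mathcal C(\sigma)$ appears as a bridge in $c_{\mathcal N_\uparrow}(\sigma)$ with multiplicity $m_\gamma\ge 1$, and dividing out a multiplicity readable from $\gamma$'s own colour recovers $c_{\mathcal C}(\sigma)$.

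The genuine difference is your one-iteration lag, and it is needed. The paper inducts on the unshifted claim that the reduced and full colourings are equivalent at every $t$, asserting that $|\mathcal B(\delta)|$ is encoded in the colour of $\delta$ at every iteration. With the rank-only initialization of Definition~\ref{ccwldef} this fails at $t=0$, and the unshifted claim is false at $t=1$. For example, take the rank-$1$ cells $\{v,a\},\{v,b\},\{v,c\},\{w,a,b\},\{w,c\}$ on $\mathrm S=\{v,w,a,b,c\}$. Coverage holds, since each rank-$1$ cell has at least two vertices and each vertex lies in at least two rank-$1$ cells. The $0$-cells $\{v\}$ and $\{w\}$ have empty boundary and lower multisets and three identical (rank-$0$, rank-$1$) pairs in their upper multisets, so they get the same reduced colour at $t=1$. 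Yet $|\mathcal C(\{v\})|=3\neq 2=|\mathcal C(\{w\})|$ separates them under the full rule.

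Your statement $a^{(t+1)}\sqsubseteq b^{(t)}$ is what actually holds. Because $c_{\mathcal B}$ is retained, $m_\gamma$ is readable from $a^{(t+1)}(\gamma)$ for every $t\ge 0$. Corollary~\ref{Corollary11} applied to it shows directly that whatever the full test separates at iteration $t$, the reduced test separates at iteration $t+1$. So the stabilization remark and your second approach are unnecessary.

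The paper's per-iteration equivalence is the stronger statement, but it only works under the WL-compatible initialization introduced later for Theorem~\ref{theorem02}. Your lagged version proves the lemma's actual claim, equal distinguishing power, under the lemma's own assumptions, at the cost of one extra round.

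For the same reason, drop your fallback ``$a^{(t)}\sqsubseteq b^{(t)}$ for $t\ge1$ after checking the base case directly'': the example above refutes that base case.
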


\begin{proof} 
Let $\mathcal{CC}_1 $, $\mathcal{CC}_2$ be two combinatorial complexes. The colorings rules $\mathrm{HASH} (c_\sigma^t, c_{\mathcal{B}}^t(\sigma), c_{\mathcal{C}}^t(\sigma), c_{\mathcal{N}_{\downarrow}}^t(\sigma), c_{\mathcal{N}_{\uparrow}}^t(\sigma) )$, $\mathrm{HASH} (c_\sigma^t, c_{\mathcal{B}}^t(\sigma), c_{\mathcal{N}_{\downarrow}}^t(\sigma), c_{\mathcal{N}_{\uparrow}}^t(\sigma) )$ denote as $a^t$ and $b^t$ at iteration $t$, respectively. 
Since $a^t$ incorporates the additional co-boundary colors $c_{\mathcal{C}}^t(\sigma)$, it means that $a^t \sqsubseteq b^t$ (Definition \ref{defb3}). By induction, if we prove $b^t \sqsubseteq a^t$, it then implies $a^t \equiv b^t$. 
For the base case $t=0$, all cells are initialized identically, $b^{(0)} \sqsubseteq a^{(0)}$ holds trivially.  Assume $\sigma \in \mathrm{C}_1$ and $\tau \in \mathrm{C}_2$ are cells of the same rank such that $b_\sigma^{t+1} = b_\tau^{t+1}$. Thus, we have $b_\sigma^t = b_\tau^t$, $b_{\mathcal{N}_\downarrow}^t(\sigma) = b_{\mathcal{N}_\downarrow}^t(\tau)$, $b_{\mathcal{N}_\uparrow}^t(\sigma) = b_{\mathcal{N}_\uparrow}^t(\tau)$, and $b_{\mathcal{B}}^t(\sigma) = b_{\mathcal{B}}^t(\tau)$. 
To complete the induction, we next prove that these equalities imply $b_{\mathcal{C}}^t(\sigma) = b_{\mathcal{C}}^t(\tau)$. 

If coverage conditions are satisfied, Lemma \ref{lemma:equivalence_coverage} Eq.~\eqref{covercond02} shows the co-boundary can be recovered from the upper adjacency bridges: $\mathcal{C}(\sigma) = \bigcup_{\tau \in \mathcal{N}_{\uparrow}(\sigma)} \mathcal{C}(\sigma, \tau)$.   
By the definition of the upper adjacency $\mathcal{N}_{\uparrow}(\sigma)$ ($c_{\mathcal{N}_{\uparrow}}(\sigma) = \{\{(c_{\tau}, c_{\delta}) \mid \tau \in \mathcal{N}_{\uparrow}(\sigma)$ and $\delta \in \mathcal{C}(\sigma, \tau)\}\}$), 
it holds $b_{\mathcal{N}_\uparrow}^t(\sigma) = b_{\mathcal{N}_\uparrow}^t(\tau)$ that ensures that
$
  \{\{ b_{\delta_\sigma}^{t} \mid (\cdot, b_{\delta_\sigma}^{t}) \in b_{\mathcal{N}_\uparrow}^{t}(\sigma) \}\}$
=
$\{\{ b_{\delta_\tau}^{t} \mid (\cdot, b_{\delta_\tau}^{t}) \in b_{\mathcal{N}_\uparrow}^{t}(\tau) \}\}$. 
Each coface $\delta_\sigma \in \mathcal{C}(\sigma)$ contains exactly $|\mathcal{B}(\delta_\sigma)|$ boundary cells. Since $\sigma$ is one of them, $\delta_\sigma$ contributes $|\mathcal{B}(\delta_\sigma)| - 1$ tuples to the upward multiset of $\sigma$. 
Because the boundary size $|\mathcal{B}(\delta_\sigma)|$ is encoded in the coloring $b_{\delta_\sigma}^{t}$, we partition the multisets by these degrees. By dividing out the repeated entries corresponding to these multiplicities, $
\{\{ b_{\delta_\sigma}^{t} \mid \delta_\sigma \in \mathcal{C}(\sigma) \}\}
$=$\{\{ b_{\delta_\tau}^{t} \mid \delta_\tau \in \mathcal{C}(\tau) \}\}$ implies $b_{\mathcal{C}}^t(\sigma) = b_{\mathcal{C}}^t(\tau)$. By the induction hypothesis, we further have $a_\sigma^t = a_\tau^t$, $a_{\mathcal{N}_\downarrow}^t(\sigma) = a_{\mathcal{N}_\downarrow}^t(\tau)$, $a_{\mathcal{N}_\uparrow}^t(\sigma) = a_{\mathcal{N}_\uparrow}^t(\tau)$, $a_{\mathcal{B}}^t(\sigma) = a_{\mathcal{B}}^t(\tau)$, and $a_{\mathcal{C}}^t(\sigma) = a_{\mathcal{C}}^t(\tau)$. Therefore, $a_\sigma^{t+1} = a_\tau^{t+1}$.
\end{proof}

In the following theorem, we assume WL-compatible initialization, where the initial color of each cell encodes its rank and the sizes of its boundary and co-boundary neighborhoods.

\begin{Theorem}
\label{theorem02} 
CCWL with 
$\mathrm{HASH}$ $(c^{(t)}_\sigma,  c^{(t)}_{\mathcal{N}_{\downarrow}}(\sigma), c^{(t)}_{\mathcal{N}_{\uparrow}}(\sigma) )$ is as powerful as CCWL with the generalized update rule $\mathrm{HASH}\big(c^{(t)}(\sigma),  c^{(t)}_{\mathcal{B}}(\sigma)$, $c^{(t)}_{\mathcal{C}}(\sigma)$, $c^{(t)}_{\mathcal{N}_{\downarrow}}(\sigma)$,  $c^{(t)}_{\mathcal{N}_{\uparrow}}(\sigma)\big)$ if combinatorial complexes satisfy the coverage conditions. 
\end{Theorem}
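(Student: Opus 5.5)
The plan follows the template of Lemma~\ref{lemma12}, with one extra step that also removes the boundary multiset. Write $a^{(t)}$ for the full five-argument CCWL coloring and $b^{(t)}$ for the reduced coloring $\mathrm{HASH}(c^{(t)}_\sigma, c^{(t)}_{\mathcal{N}_{\downarrow}}(\sigma), c^{(t)}_{\mathcal{N}_{\uparrow}}(\sigma))$.

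\textbf{Easy direction.} The input of $b$ is a sub-tuple of the input of $a$. By a routine induction using injectivity of $\mathrm{HASH}$, this gives $a^{(t)}\sqsubseteq b^{(t)}$ in the sense of Definition~\ref{defb3}.

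\textbf{Converse direction.} We prove $b^{(t)}\sqsubseteq a^{(t)}$ by induction on $t$. The inductive statement must be carried uniformly over all cells of both complexes, because bridge cells have different ranks from $\sigma$ and we need to transfer color equalities to them. The base case holds because both colorings start from the same WL-compatible initialization, which encodes rank, $|\mathcal{B}(\sigma)|$ and $|\mathcal{C}(\sigma)|$.

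For the step, assume $b^{(t+1)}_\sigma=b^{(t+1)}_\tau$ for $\sigma\in\mathrm{C}_1$ and $\tau\in\mathrm{C}_2$. Injectivity gives $b^{(t)}_\sigma=b^{(t)}_\tau$ and equality of the lower and upper bridge multisets. Since $b^{(t)}$ is a refinement of $b^{(0)}$, the colors $b^{(t)}_\delta$ still encode the initial data: the rank of $\delta$ and the sizes of its boundary and co-boundary. Two consequences follow.
\begin{itemize}
\item \emph{Co-boundary multiset.} We project $b^{(t)}_{\mathcal{N}_\uparrow}(\sigma)$ onto its second coordinate. By Lemma~\ref{lemma:equivalence_coverage}, Eq.~\eqref{covercond02}, every $\delta\in\mathcal{C}(\sigma)$ actually occurs. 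Each such $\delta$ occurs exactly $|\mathcal{B}(\delta)|-1$ times, and $|\mathcal{B}(\delta)|\ge 2$ by co-boundary coverage. We partition the projected multiset by the value of $|\mathcal{B}(\delta)|$ read from the color and divide each class by $|\mathcal{B}(\delta)|-1$. This recovers $\{\{b^{(t)}_\delta\mid\delta\in\mathcal{C}(\sigma)\}\}$, so $b^{(t)}_{\mathcal{C}}(\sigma)=b^{(t)}_{\mathcal{C}}(\tau)$. This is the argument of Lemma~\ref{lemma12}.
\item \emph{Boundary multiset.} Symmetrically, we project $b^{(t)}_{\mathcal{N}_\downarrow}(\sigma)$ onto its second coordinate. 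By Eq.~\eqref{covercond01}, each $\eta\in\mathcal{B}(\sigma)$ occurs exactly $|\{\kappa\in\mathcal{C}(\eta):\mathrm{rk}(\kappa)=\mathrm{rk}(\sigma)\}|-1$ times. Boundary coverage makes this multiplicity positive. The count of same-rank cofaces is again read off from the initial part of $b^{(t)}_\eta$, and we divide it out to obtain $b^{(t)}_{\mathcal{B}}(\sigma)=b^{(t)}_{\mathcal{B}}(\tau)$.
\end{itemize}

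\textbf{Closing the induction.} The reconstructed multisets are expressed in $b^{(t)}$-colors. The induction hypothesis $b^{(t)}\sqsubseteq a^{(t)}$, applied cellwise through the bijections realizing these multiset equalities (as in Lemma~\ref{lemma9}), converts all five arguments of $a$ into equalities in $a^{(t)}$-colors. Injectivity of $\mathrm{HASH}$ then gives $a^{(t+1)}_\sigma=a^{(t+1)}_\tau$. Combining both directions yields $a^{(t)}\equiv b^{(t)}$, and Corollary~\ref{Corollary11} shows the two tests distinguish exactly the same pairs of complexes.

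\textbf{Main obstacle.} The delicate step is the multiplicity-division argument, especially for the boundary. The multiplicity of $\eta$ counts same-rank cofaces of $\eta$, not $|\mathcal{C}(\eta)|$. In a general CC the co-boundary of $\eta$ can contain cells of several ranks, because rank-adjacency is not enforced. The first fallback is to interpret $\mathcal{C}(\cdot)$ as rank-adjacent incidence, as the paper stipulates. If that does not suffice, we strengthen the WL-compatible initialization so that it records co-boundary sizes per rank.
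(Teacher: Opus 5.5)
Your proposal is correct and follows essentially the same route as the paper. Both use the sub-tuple direction $a^{(t)}\sqsubseteq b^{(t)}$ and an induction for $b^{(t)}\sqsubseteq a^{(t)}$ starting from the WL-compatible initialization; both recover $b^{(t)}_{\mathcal{C}}$ from the upper bridges by dividing out $|\mathcal{B}(\delta)|-1$ (the Lemma~\ref{lemma12} step) and $b^{(t)}_{\mathcal{B}}$ from the lower bridges by dividing out the coface count, then conclude via the induction hypothesis and injectivity of $\mathrm{HASH}$. Your additional care (an explicit induction for the easy direction, coverage guaranteeing positive multiplicities, and the same-rank coface count, which the paper's rank-adjacency convention reduces to $|\mathcal{C}(\eta)|-1$) only makes explicit what the paper's proof leaves implicit.
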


\begin{proof}
Let $a^t$ and $b^t$ denote the colorings produced by the full and restricted CCWL updates $\mathrm{HASH}\big(c^{(t)}(\sigma)$, $c^{(t)}_{\mathcal{B}}(\sigma)$, $c^{(t)}_{\mathcal{C}}(\sigma),$$ c^{(t)}_{\mathcal{N}_{\downarrow}}(\sigma), c^{(t)}_{\mathcal{N}_{\uparrow}}(\sigma)\big)$, and $\mathrm{HASH} \big(c^{(t)}_\sigma, c^{(t)}_{\mathcal{N}_{\downarrow}}(\sigma), c^{(t)}_{\mathcal{N}_{\uparrow}}(\sigma) \big)$, respectively. Since the full update $a^t$ contains all information used by the restricted update $b^t$, we have $a^t\sqsubseteq b^t$. It remains to prove the reverse refinement $b^t\sqsubseteq a^t$ by induction, which yields $a^t\equiv b^t$. For the base case $t=0$, the initial colors are determined by $c^{(0)}(\sigma) = \mathrm{HASH}_0(\mathrm{rk}(\sigma), |\mathcal{B}(\sigma)|, |\mathcal{C}(\sigma)|)$. Since the boundary and co-boundary sizes are implicitly deterministic for both schemes at initialization, we have $a^0 \equiv b^0$.

Suppose that $b^t \sqsubseteq a^t$ holds for some iteration $t$. Assume $\sigma \in \mathrm{C}_1$ and $\tau \in \mathrm{C}_2$ are cells of the same rank such that $b_\sigma^{t+1}=b_\tau^{t+1}$. 
% By the injectivity of $\mathrm{HASH}$ in the restricted update, its input arguments must coincide. 
By the restricted update, $b_\sigma^t=b_\tau^t$, $b_{\mathcal N_\downarrow}^t(\sigma) = b_{\mathcal N_\downarrow}^t(\tau)$, and $b_{\mathcal N_\uparrow}^t(\sigma) = b_{\mathcal N_\uparrow}^t(\tau)$. 
% To prove that the restricted rule $b^t$ also determines the boundary and co-boundary color multisets, namely $b_{\mathcal B}^t(\sigma) = b_{\mathcal B}^t(\tau)$ and $b_{\mathcal C}^t(\sigma) = b_{\mathcal C}^t(\tau)$. 
Under the coverage conditions, Lemma~\ref{lemma12} recovers the co-boundary  multisets from the upper-adjacent, which yields $b_{\mathcal C}^t(\sigma) = b_{\mathcal C}^t(\tau)$. 
We next recover the boundary color multiset. By the definition of lower-neighborhood color multisets, $b_{\mathcal N_\downarrow}^t(\sigma) = \left\{\!\left\{ \big(b_\tau^t, b_\eta^t\big) \;\middle|\; \tau\in\mathcal N_\downarrow(\sigma), \eta\in\mathcal B(\sigma,\tau) \right\}\!\right\},$ where $\mathcal B(\sigma,\tau) = \mathcal B(\sigma)\cap \mathcal B(\tau)$ is the shared boundary bridge set of $\sigma$ and $\tau$. Since $\sigma$ is boundary-covered, Lemma \ref{lemma:equivalence_coverage} Eq.~\eqref{covercond01} implies $\mathcal B(\sigma) = \bigcup_{\tau\in\mathcal N_\downarrow(\sigma)} \mathcal B(\sigma,\tau)$. Each boundary cell $\eta\in\mathcal B(\sigma)$ appears $|\mathcal C(\eta)|-1$ times, because $\eta$ is incident to $\sigma$ and to $|\mathcal C(\eta)|-1$ other cells of the same rank as $\sigma$. Each such cell is lower adjacent to $\sigma$ through $\eta$. 
% Again, the repetition factor $|\mathcal C(\eta)|-1$ is determined by the color $b_\eta^t$, since the initial coloring encodes $|\mathcal C(\eta)|$ and the subsequent updates are injective refinements. 
Therefore, normalizing the multiplicity of each color by its encoded repetition factor recovers $ b_{\mathcal B}^t(\sigma) = \left\{\!\left\{ b_\eta^t \;\middle|\; \eta\in\mathcal B(\sigma) \right\}\!\right\}.$ Since $b_{\mathcal N_\downarrow}^t(\sigma) = b_{\mathcal N_\downarrow}^t(\tau)$, it gives $b_{\mathcal B}^t(\sigma) = b_{\mathcal B}^t(\tau)$.

We have now shown that $b_\sigma^t=b_\tau^t$, $b_{\mathcal B}^t(\sigma) = b_{\mathcal B}^t(\tau)$, $b_{\mathcal C}^t(\sigma) = b_{\mathcal C}^t(\tau)$, $b_{\mathcal N_\downarrow}^t(\sigma) = b_{\mathcal N_\downarrow}^t(\tau)$, and $b_{\mathcal N_\uparrow}^t(\sigma) = b_{\mathcal N_\uparrow}^t(\tau)$. By the induction hypothesis $b^t\sqsubseteq a^t$, equality of $b^t$-colored cells implies equality of the corresponding $a^t$-colored cells. Hence the corresponding full-rule arguments also agree: $a_\sigma^t=a_\tau^t$, $a_{\mathcal B}^t(\sigma) = a_{\mathcal B}^t(\tau)$, $a_{\mathcal C}^t(\sigma) = a_{\mathcal C}^t(\tau)$, $a_{\mathcal N_\downarrow}^t(\sigma) = a_{\mathcal N_\downarrow}^t(\tau)$, and $a_{\mathcal N_\uparrow}^t(\sigma) = a_{\mathcal N_\uparrow}^t(\tau)$.Therefore, the full generalized update receives identical inputs for $\sigma$ and $\tau$ at iteration $t$, and the injectivity of $\mathrm{HASH}$ yields $a_\sigma^{t+1} = a_\tau^{t+1}$. Thus, $b^{t+1}\sqsubseteq a^{t+1}$. By induction, $b^t\sqsubseteq a^t$ for all $t$. Together with $a^t\sqsubseteq b^t$, we obtain $a^t\equiv b^t$ for all iterations $t$. 
% Therefore, under the coverage conditions, the restricted CCWL update $\mathrm{HASH}\big(c_\sigma^t, c_{\mathcal N_\downarrow}^t(\sigma), c_{\mathcal N_\uparrow}^t(\sigma)\big)$ is as powerful as the full generalized update $\mathrm{HASH}\big(c_\sigma^t, c_{\mathcal B}^t(\sigma), c_{\mathcal C}^t(\sigma), c_{\mathcal N_\downarrow}^t(\sigma), c_{\mathcal N_\uparrow}^t(\sigma)\big)$.
\end{proof}

% Building on this, we can derive this lemma as
% \begin{Lemma}
% \label{stabilitycond}
% CCWL is at least as powerful as 1-WL in distinguishing of non-isomorphic combinatorial complexes.
% \end{Lemma}

% \begin{proof}[\textbf{Proof for Lemma} \ref{stabilitycond}]

% Let $\mathcal{CC}_1$ and $\mathcal{CC}_2$ be two combinatorial complexes. Let $c$ denote the stable coloring produced by a WL-style test (e.g., a restricted version operating only on 0-cells or using fewer aggregation rules), and let $d$ denote the stable coloring produced by CCWL. 
% Since CCWL aggregates over more structural information. Specifically, it considers both lower neighbourhoods $\mathcal{N}_\downarrow$, upper neighbourhoods $\mathcal{N}_\uparrow$, boundary and coface relations, its coloring $d$ distinguishes more cells than $c$. Therefore, the partition induced by $c$ is coarser than that induced by $d$, i.e., $ c \sqsubseteq d.$ 
% Suppose that the WL test distinguishes $\mathcal{CC}_1$ and $\mathcal{CC}_2$. That is, $
% c^{\mathcal{CC}_1} \ne c^{\mathcal{CC}_2},
% $ which means the multiset of colors differs between the two complexes. 
% By Corollary~\ref{Corollary11}, since $c \sqsubseteq d$, we have:
%  $
% \text{If } c^{\mathcal{CC}_1} \ne c^{\mathcal{CC}_2}, \text{ then } d^{\mathcal{CC}_1} \ne d^{\mathcal{CC}_2} $. Therefore, CCWL also distinguishes $\mathcal{CC}_1$ and $\mathcal{CC}_2$.
% This holds for any pair distinguishable by the WL test. Hence, CCWL is at least as powerful as the WL test in distinguishing non-isomorphic combinatorial complexes.

% \end{proof}

\subsection{Combinatorial Complex Isomorphism Network}
We instantiate the preceding CCWL analysis as a neural message-passing model, termed the Combinatorial Complex Isomorphism Network (CCIN). Rather than introducing an additional neighborhood system, CCIN follows the reduced CCWL refinement justified by Theorem~\ref{theorem02}. This design uses only lower- and upper-adjacent bridge information while preserving the discriminative power of the full four-neighborhood CCWL under the coverage conditions.
Let $\mathcal{CC}=(\mathcal S,\mathcal C,\mathrm{rk})$ be a combinatorial complex, and let $N_{\mathcal{C}}= \{\mathcal B,\mathcal C,\mathcal N_{\downarrow},\mathcal N_{\uparrow}\}$ denote the four structural neighborhood functions. 
% The $l$-th layer of CCNN updates the representation $h_\sigma^{(l)}\in\mathbb R^{F_l}$ of a cell $\sigma\in\mathcal C$ by
% \begin{equation}
% \label{eq:ccnn_general} 
% h_{\sigma}^{(l+1)} = \phi^{(l)} \left( h_{\sigma}^{(l)}, \bigotimes_{\mathcal N\in\mathcal N_{\mathcal C}} \mathrm{AGG}_{\tau\in\mathcal N(\sigma)} M_{\mathcal N}^{(l)} \big(h_\sigma^{(l)},h_\tau^{(l)}\big) \right),
% \end{equation}
% where $M_{\mathcal N}^{(l)}$ is a neighborhood-specific message function, $\mathrm{AGG}$ is a permutation-invariant aggregation operator, and $\bigotimes$ combines messages from different neighborhood types.
From the four CCWL neighborhoods in Definition \ref{cellneighhood}, their messages aggregation with the input features $h_\sigma$ can be written as
\begin{equation}
\small
\begin{aligned}
m_{\mathcal{B}}^{t+1}(\sigma) & = \mathop{\operatorname{AGG}}_{\tau \in \mathcal{B}(\sigma)}\left(M_{\mathcal{B}}\left(h_\sigma^t, h_\tau^t\right)\right),\\
m_{\mathcal{C}}^{t+1}(\sigma) & = \mathop{\operatorname{AGG}}_{\tau \in \mathcal{C}(\sigma)}\left(M_{\mathcal{C}}\left(h_\sigma^t, h_\tau^t\right)\right),\\
m^{(t+1)}_{\mathcal{N}_{\downarrow}} (\sigma) & = \mathop{\text{AGG}}_{\tau\in \mathcal{N}_{\downarrow},\delta\in\mathcal{B}(\sigma, \tau) } \left ( M_{\mathcal{N}_{\downarrow}} (h_\sigma^{(t)}, h_\tau^{(t)},h_\delta^{(t)} ) \right ), \\
 m^{(t+1)}_{\mathcal{N}_{\uparrow}}(\sigma) & = \mathop{\text{AGG}}\limits_{\tau\in \mathcal{N}_{\uparrow},\delta\in\mathcal{C}(\sigma, \tau) } \left ( M_{\mathcal{N}_{\uparrow}} (h_\sigma^{(t)}, h_\tau^{(t)},h_\delta^{(t)} ) \right ). 
\end{aligned}
\end{equation}

Guided by Theorem~\ref{theorem02}, under the coverage conditions, the boundary and co-boundary color multisets can be recovered from lower- and upper-adjacent bridge information.
SI Sec.III supplements the theoretical connection between CCIN and reduced CCWL. The cells in CCNNs receive two types of messages, then update operation takes into account these two types of messages and updates the features of the cells:
\begin{equation}
\label{equation05}
 \begin{aligned}
 h_{\sigma}^{(t+1)} = \text{Update} \left ( h_{\sigma}^{(t)}, m^{(t+1)}_{\mathcal{N}_{\downarrow}}(\sigma), m^{(t+1)}_{\mathcal{N}_{\uparrow}}(\sigma)\right ).
\end{aligned}
\end{equation}
To obtain a global embedding from a $d$-dimension combinatorial complex $\mathcal{CC}$ of a CCNN with $L$ layers, the readout function takes as input the sets of features corresponding to all dimensions of the combinatorial complex
\begin{equation}
\small
\begin{aligned}
 h_{\mathcal{CC}} = \text{READOUT} ( \{ \{ h_\sigma^{(L)} \} \}_{\text{dim}=0},\dots, \{ \{ h_\sigma^{(L)} \} \}_{\text{dim}=d} ),
\end{aligned} 
\end{equation}
where $d=\dim(\mathcal{CC})$. The \text{READOUT} can be instantiated by any permutation-invariant pooling function, such as SUM, MEAN, or MAX pooling function.

\section{Experimental Analysis}

To evaluate the performance of the proposed combinatorial complex isomorphism network, extensive experiments are conducted on synthetic and real-world datasets. In the following, we introduce experimental results, ablation study.

\subsection{Experimental Settings}

\begin{table}[htbp]
% \scriptsize 
\fontsize{6}{7.2}\selectfont 
\centering
\caption{Details of Strongly Regular Families}
\setlength{\tabcolsep}{4pt} 
\begin{tabular}{lccccccccc}
\hline Family &(16,6,2,2)&(25,12,5,6)&(26,10,3,4)& (28,12,6,4)&(29,14,6,7) \\
\hline 
Graphs Number & 2 & 15 & 10 & 4 & 41 \\
\hline Family &  (35,16,6,8)&(35,18,9,9)&(36,14,4,6)&(40,12,2,4) \\
\hline 
Graphs Number &  3854 & 227 & 180 & 28 \\

\hline
\end{tabular}
\label{tab:srdatasets}
\end{table}

\begin{table}[htbp]
% \scriptsize 
\fontsize{6}{7.2}\selectfont 
\centering
\setlength{\tabcolsep}{1pt} 
\centering
\caption{Overview of the graph learning datasets.}
\label{tab:realdatasets}
\begin{threeparttable}
\begin{tabular}{cccccccccc}
 \toprule 
\multicolumn{2}{c}{Dataset} & \# Graphs & Avg. nodes & Avg. edges & Prediction task & Class  & Metric \\
\midrule
\multirow{9}*{\rotatebox{90}{TUDataset}}  & MUTAG & 188 & 17.93 & 19.79 & Classification & 2 & Accuracy \\
 & PTC & 344 & 25.56 & 25.96 & Classification & 2 & Accuracy \\
 & NCI1 & 4110 & 29.87 & 32.30 & Classification & 2 & Accuracy \\
 & NCI109 & 4127 & 29.68 & 32.13 & Classification & 2 & Accuracy \\
 & PROTEINS & 1113 & 39.06 & 72.82 & Classification & 2 & Accuracy \\
 & IMDB-B & 1000 & 19.77 & 96.53 & Classification & 2 & Accuracy \\
 & IMDB-M & 1500 & 13.0 & 65.9 & Classification & 3 & Accuracy \\
 & RDT-B & 2000 & 429.6 & 497.8 & Classification & 2 & Accuracy \\
 & RDT-M & 5000 & 508.5 & 575.5 & Classification & 5 & Accuracy \\
\midrule
% \multirow{8}*{\rotatebox{90}{OGBG} } &  MOLHIV & 41,127 & 25.5 & 27.5 & Binary Classification & 2 & AUROC \\
%  & MOLBACE & 1513 & 34.1 & 36.9 & Binary Classification & 2 & AUROC \\
%  & MOLBBP & 2,039 & 24.1 & 51.9  & Binary Classification & 2 & AUROC \\ 
%  & MOLESOL & 19,717 & 13.3 & 13.7 & Regression &  & RMSE \\
%  & moltox21&  7,831 & 18.6 & 19.3  & Regression & 12  & RMSE\\ 
%  & MOLTOXCAST & 8,576 & 18.8 & 19.3 & Regression & 617  & RMSE \\
%  & MOLPCBA & 437,929 & 26.0 & 28.1  & Regression & 128 & RMSE \\
%  & PPA & 158,100& 243.4 &  2,266.1  & Regression & 1  & RMSE \\
% \midrule
\multicolumn{2}{c}{ZINC-Small} & 12,000 & 23.2 & 24.9 & Regression & 12 & MAE \\
% \multicolumn{2}{c}{ ZINC-FULL} & 249,456 & 23.2 & 49.8 & Regression & 12 & MAE\\
\multicolumn{2}{c}{MOLHIV} & 41,127 & 25.5 & 27.5 & Binary Classification & 2 & AUROC \\
\midrule
\multicolumn{2}{c}{PEPTIDES-FUNC} & 15,535 & 150.9 & 307.3 & Multi-task Classification & 10 & AP \\
\multicolumn{2}{c}{PEPTIDES-STRUCT} & 15,535 & 150.9 & 307.3 & Multi-task Regression & 11 & MAE \\
% Others & & & & & & & & \\
\bottomrule 
\end{tabular}
\begin{tablenotes}
    \item ``\#'' denotes ``number of'', and ``Avg.'' is short for ``Average'', MAE is Mean Absolute Error, AUROC is the Area Under the Curve.
\end{tablenotes}
\end{threeparttable}
\end{table}

\begin{figure}[htbp]
 \centering
 \includegraphics[width=0.95\linewidth]{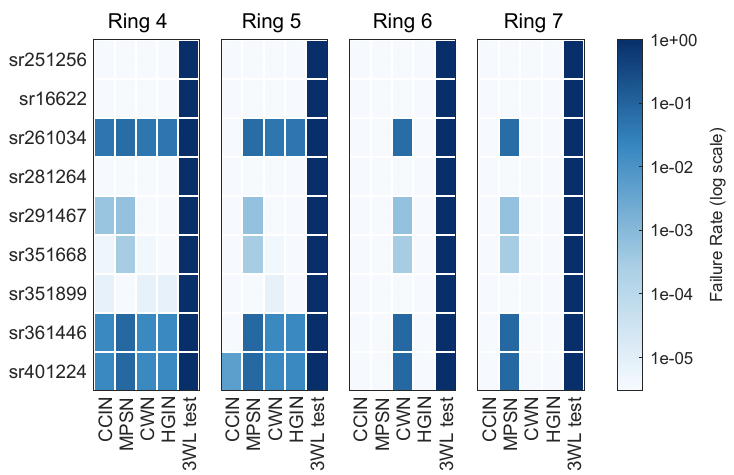} 
\caption{Failure-rate heatmaps on strongly regular graph (SRG) families under different ring sizes ($k=4,5,6,7$). Failure rates are shown on a logarithmic scale; lower values indicate better structural discrimination.} 
 \label{fig03}
\end{figure}
We evaluate CCIN on both synthetic benchmarks and real-world graph learning tasks. Since large-scale benchmarks natively defined on combinatorial complexes remain limited in topological deep learning~\cite{pmlr-v235-papamarkou24a}, we follow lifting-based evaluation protocols~\cite{telyatnikov2024topobenchmarkx,bodnar2021weisfeiler}. Specifically, input graphs are lifted to combinatorial complexes by treating vertices and edges as $0$- and $1$-cells, respectively, and constructing higher-order cells through cyclic lifting. The same lifting protocol is applied to applicable topological baselines for fair comparison. We provide detailed information of datasets as follows.

\begin{table*}[htbp]
% \fontsize{6}{7.2}\selectfont
\scriptsize
\centering
\caption{Graph classification accuracy (Mean$\pm$Std) of our CCIN and the baselines on the datasets from TUDataset collection. Best performance is highlighted in bold. N/A means not available.}
\label{resultstable2_part1}
\begin{minipage}[t]{0.85\textwidth}
\centering
\begin{tabular}{lccccccccc}
\toprule 
 & MUTAG & PTC & PROTEINS & NCI1 & NCI109 & IMDB-B & IMDB-M \\
\midrule
% RWK~\cite{gartner2003graph} & 79.2$\pm$2.1 & 55.8$\pm$0.6 & 59.6$\pm$0.3 & N/A & N/A & N/A & N/A  \\
% GK(k = 3) ~\cite{shervashidze2009efficient} & 81.4$\pm$1.9  & 55.4$\pm$0.4 & 71.4$\pm$1.5 & 62.5$\pm$0.3 & 64.9$\pm$1.0 & 50.9$\pm$3.8 & N/A \\
% % PK & 76.0$\pm$2.7 & 59.5$\pm$2.4 & 73.7$\pm$0.7 & & & & \\
% WL kernel ~\cite{shervashidze2011weisfeiler} & 90.4$\pm$5.7 & 59.9$\pm$4.3 & 75.0$\pm$3.1 & 86.0$\pm$1.8 & 73.8$\pm$3.9 & N/A & N/A \\
% \midrule
DGCNN \cite{zhang2018end}, & 85.8$\pm$1.8 & 58.6$\pm$2.5 & 75.5$\pm$0.9 & 74.4$\pm$0.5 & N/A & 70.0$\pm$0.9 & 47.8$\pm$0.9 \\
IGN ~\cite{cai2022convergence} & 83.9$\pm$13.0 & 58.5$\pm$6.9 & 76.6$\pm$5.5 & 74.3$\pm$2.7 & 72.8$\pm$1.5 & 72.0$\pm$5.5 & 48.7$\pm$3.4 \\
GIN ~\cite{xupowerful} & 89.4$\pm$5.6 & 64.6$\pm$7.0 & 76.2$\pm$2.8 & 82.7$\pm$1.7 & N/A  & 75.1$\pm$5.1 & 52.0$\pm$2.8 \\
PPGNs~\cite{maroninvariant} & 90.6$\pm$8.7 & 66.2$\pm$6.6 & 77.2$\pm$4.7 & 83.2$\pm$1.1 & 80.2$\pm$1.4  & 73.0$\pm$5.8 & 50.5$\pm$3.6 \\
Natural GN ~\cite{de2020natural}& 89.4$\pm$1.6 & 66.8$\pm$1.7 & 71.7$\pm$1.0 & 82.4$\pm$1.3 & N/A & 74.8$\pm$2.0 & 51.3$\pm$1.5 \\
GSN \cite{bouritsas2022improving} & 92.2$\pm$7.5 & 67.2$\pm$7.2 & 75.6$\pm$5.0 & 83.0$\pm$2.0 & N/A  & 73.36 & 51.5  \\
MSPN \cite{bodnar2021weisfeilernips}& 88.3$\pm$10.7 & 62.6$\pm$9.3 & 62.6$\pm$9.3 & 80.4$\pm$1.2 & 79.1$\pm$1.6 & 73.7$\pm$4.0 & \underline{52.1$\pm$3.9} \\
GTR \cite{huang2023growing} & 86.6$\pm$1.4 &65.2$\pm$4.6 & 75.3$\pm$0.8 & N/A &  N/A & 73.1$\pm$0.8 & 79.4$\pm$0.3 \\
CWN \cite{bodnar2021weisfeiler} & 90.0$\pm$7.4 & 62.1$\pm$9.3 & 73.4$\pm$4.4 & \textbf{84.7$\pm$1.7} & \underline{80.3$\pm$1.9} & 73.5$\pm$4.5 & 51.0$\pm$3.1 \\
GraphSNN~\cite{wijesinghe2022new} & 87.3$\pm$3.1 & 61.6$\pm$2.8 & 74.1$\pm$3.2 & 76.4$\pm$1.7 & N/A & 74.8$\pm$3.5 &  N/A \\
RePHINE \cite{immonen2023going} & 87.4$\pm$6.3 & 64.9$\pm$3.7 & 72.3$\pm$1.9 & 80.9$\pm$1.9 & 79.2$\pm$1.7 & 69.4$\pm$3.8 & N/A \\
WLHN \cite{nikolentzos2023weisfeiler} & 86.3$\pm$7.4 & 65.1$\pm$2.4 & 75.9$\pm$1.9 & 79.2$\pm$2.1 &  N/A & 73.4$\pm$3.7 & 49.7$\pm$3.6 \\
G3N \cite{wang2023mathscr} & 89.9$\pm$8.0 & 60.0$\pm$4.8 & \underline{75.9$\pm$2.8} & 78.6$\pm$1.9 & 79.2$\pm$1.3 & 71.0$\pm$2.2 & 45.2$\pm$2.8 \\
HTML \cite{li2024hierarchical}& 88.9$\pm$1.8 & 66.9$\pm$4.2 & 74.9$\pm$0.3 & 78.7$\pm$0.7 & 78.8$\pm$0.6 & 71.7$\pm$0.4 & N/A \\
PathNN \cite{michel2023path} & 90.2$\pm$4.7 & 65.8$\pm$2.7 & 75.2$\pm$3.9 & 77.5$\pm$1.6 & 78.1$\pm$2.1 & 72.6$\pm$3.3 & 50.8$\pm$4.5 \\
TopNets \cite{verma2024topological} & \underline{92.7$\pm$1.9} & 65.7$\pm$3.6 & 73.8$\pm$1.5 & 79.1$\pm$1.2 & 78.4$\pm$0.7 & 73.1$\pm$1.8 & N/A \\
TopoTune \cite{papillontopotune} & 86.4$\pm$6.5 & \underline{67.2$\pm$4.8} & 72.5$\pm$3.1 & 77.6$\pm$1.1 & 77.2$\pm$0.2 & \underline{76.3$\pm$2.7} &  N/A  \\
KGWL \cite{zhangimproved}& 82.5$\pm$5.7 & 66.2$\pm$3.5 & 72.8$\pm$2.6 & 74.9$\pm$3.5 & 78.5$\pm$1.4 & 74.4$\pm$1.9 & 51.6$\pm$3.1 \\
CCIN & \textbf{96.4$\pm$2.1} & \textbf{67.6$\pm$10.1} & \textbf{76.1$\pm$2.5} & \underline{83.2$\pm$1.6} & \textbf{81.1$\pm$2.0} & \textbf{78.3$\pm$4.5} & \textbf{54.7$\pm$3.1} \\
\bottomrule
\end{tabular}
\end{minipage}
\end{table*}

\begin{table*}[htbp]
\centering
\fontsize{6}{7.2}\selectfont
\begin{minipage}[t]{0.36\textwidth}
\centering
% \begin{table}[htbp]
% \small
% \scriptsize 
% \fontsize{6}{7.2}\selectfont
\scriptsize
\setlength{\tabcolsep}{2pt} 
\centering
\caption{Performance on large-scale graph and molecular benchmarks. Results are reported over 9 runs with seed 1-9. (Mean$\pm$Std)}
\label{resultstable2_part2}
\begin{tabular}{lccc}
\toprule
\multirow{2}{*}{Method} & RDT-B & RDT-M & MOLHIV\\
\cmidrule{2-4}
 & \multicolumn{2}{c}{(Accuracy)} & (ROC-AUC) \\
\midrule
WLkernel\cite{shervashidze2011weisfeiler} & 81.0$\pm$3.1 & 52.5$\pm$2.1 & N/A  \\

GIN\cite{xupowerful} & 91.1$\pm$1.8 & 56.2$\pm$1.8 & 77.07$\pm$1.49\\
% RetGK\cite{zhang2018retgk} & 90.8$\pm$0.2 & 54.2$\pm$0.3 & N/A \\
HGCN\cite{chami2019hyperbolic} & 86.3$\pm$1.6 & 52.7$\pm$2.0 & 75.91$\pm$1.48 \\
GSN \cite{bouritsas2022improving} & 91.1$\pm$1.8 & 56.2$\pm$1.8 & 77.99$\pm$1.00 \\
MSPN\cite{bodnar2021weisfeilernips} & 92.7$\pm$0.9 & 57.0$\pm$2.0 & 78.25$\pm$0.31 \\
CWN \cite{bodnar2021weisfeiler} & \underline{93.1$\pm$1.0} & 48.2$\pm$6.6 & 78.58$\pm$0.57
 \\
G3N\cite{wang2023mathscr} & 89.4$\pm$2.1 & N/A & \underline{79.00$\pm$1.34} \\
% ESGNNs & & & 79.09$\pm$0.90 \\
MGNN\cite{kanatsoulis2024counting} & 92.0$\pm$1.8 & 56.1$\pm$1.6 & N/A \\
WLHN \cite{nikolentzos2023weisfeiler} & 90.7$\pm$1.9 & 55.2$\pm$1.2 & 78.41$\pm$0.31 \\
% Subgraphormer\cite{bar2024subgraphormer} & & & 79.48$\pm$1.28
 % \\
GraphSNN ~\cite{wijesinghe2022new} & 92.7$\pm$2.0 & \textbf{57.5$\pm$1.5} & 78.51$\pm$1.72
 \\
HTML\cite{li2024hierarchical} & 90.7$\pm$0.6 & 55.9$\pm$0.4 & 78.68$\pm$0.61 \\
CCIN & \textbf{93.4$\pm$1.1} & \underline{57.3$\pm$1.7} & \textbf{80.45$\pm$1.38} \\
\bottomrule 
\end{tabular}
\end{minipage}
\hfill
\begin{minipage}[t]{0.33\textwidth}
\centering
% \fontsize{6}{7.2}\selectfont
\scriptsize
\setlength{\tabcolsep}{2pt} 
\caption{Performance on peptides benchmarks. Results are reported over 9 runs with seed 1-9. (mean$\pm$std)}
\label{resultstable3}
\begin{tabular}{lcc}
 \toprule 
\multirow{2}{*}{Model} & {PEPTIDES-FUNC} & {PEPTIDES-STRUCT} \\
\cmidrule{2-3}
 & {(AP) $\uparrow$} & {(MAE)} $\downarrow $   \\
\midrule
% GraphTrans & 0.6313$\pm$0.0039 & 0.2777$\pm$0.0025  \\
GIN~\cite{xupowerful} & 0.5498$\pm$0.0079 & 0.3547$\pm$0.0045   \\
GatedGCN~\cite{dwivedigraph} &  0.5498$\pm$0.0079 & 0.3547$\pm$0.0045 \\
% Graph ViT \cite{he2023generalization} & 0.6942$\pm$0.0075 & 0.2449$\pm$0.0016   \\
SAN+LapPE\cite{kreuzer2021rethinking} & 0.6384$\pm$0.0121 & 0.2683$\pm$0.0043 \\
% SAN+EdgeRWSE\cite{dwivedigraph} & 0.6002$\pm$0.0048  & 0.2679$\pm$0.0015   \\
GraphFP~\cite{luong2023fragment} & 0.6267$\pm$0.0073 & 0.3137$\pm$0.0019   \\
2-DRFWL \cite{zhou2023distance} & 0.5953$\pm$0.0048 & 0.2594$\pm$0.0038   \\
GPS~\cite{rampavsek2022recipe} & \underline{0.6435$\pm$0.0041} & 0.2547$\pm$0.0005   \\
% DRGNN & 0.6586$\pm$0.0042 & 0.2495$\pm$0.0015 \\
CWN \cite{bodnar2021weisfeiler} & 0.6237$\pm$0.0038 & 0.2537$\pm$0.0042   \\
GTR \cite{huang2023growing} & 0.6351$\pm$0.0079 & 0.2568$\pm$0.0019 \\
PathNN \cite{michel2023path}& 0.6384$\pm$0.0052 & 0.2540$\pm$0.0046  \\
% GRAND \cite{zhou2023facilitating} & 0.6962$\pm$0.0015 & 0.2867$\pm$0.0009  \\
EMPSN \cite{eijkelboom2023n} & 0.6156$\pm$0.0080 & 0.2539$\pm$0.0015  \\
% B-PEARL\cite{kanatsoulislearning} & 0.6827$\pm$0.0021 & 0.248$\pm$0.00  \\
Subgraphormer\cite{bar2024subgraphormer} & 0.6415$\pm$0.0052 & \underline{0.2529$\pm$0.0020} \\
CCIN & \textbf{0.6493$\pm$0.0262} & \textbf{0.2501$\pm$0.0073}  \\ 
\bottomrule 
\end{tabular}

\end{minipage}
\hfill
\begin{minipage}[t]{0.30\textwidth}
\centering
% \fontsize{6}{7.2}\selectfont
\scriptsize
\setlength{\tabcolsep}{2pt} 
\caption{Mean Absolute Error (Mean$\pm$Std) of different methods on ZINC(12K). Results are reported over 5 runs with seed 1-5. (Mean$\pm$Std)}
\label{table005}
\begin{tabular}{lcc}
 \toprule 
{\multirow{2}{*}{Model}} & \multicolumn{2}{c}{ZINC}  \\
\cmidrule{2-3}
{} & w/o edge feats & w/i edge feats \\
\midrule
GIN ~\cite{xupowerful} & 0.387$\pm$0.015 & 0.252$\pm$0.014 \\
PNA \cite{corso2020principal} & 0.320$\pm$0.032 & 0.188$\pm$0.004 \\
HIMP~\cite{fey2020hierarchical}  & N/A & 0.151$\pm$0.006 \\
% DRFWL \cite{zhou2023distance} & & & 0.152$\pm$0.003 \\
% CWN \cite{bodnar2021weisfeiler} & 0.115$\pm$0.003 & 0.079$\pm$0.006 & N/A \\
% PathNN \cite{michel2023path} & & 0.090$\pm$0.004 & \\
GSN \cite{bouritsas2022improving} & 0.139$\pm$0.007 & 0.115$\pm$0.012  \\
% MSPN & & & \\
MPSN \cite{bodnar2021weisfeiler} & \underline{0.137$\pm$0.008} & \underline{0.094$\pm$0.004}  \\
G3N \cite{wang2023mathscr} & 0.165$\pm$0.018 & 0.128$\pm$0.015 \\
% GD-WL \cite{zhangrethinking} & & 0.086$\pm$0.007 & \\ 
I$^2$-GNN\cite{huangboosting} & N/A & 0.095$\pm$0.007 \\ 
TopoTune \cite{papillontopotune} & 0.247$\pm$0.005 & 0.191$\pm$0.003 \\
MGNN \cite{kanatsoulis2024counting} & 0.140$\pm$0.004 & 0.110$\pm$0.005 \\
CCIN & \textbf{0.125$\pm$0.003} & \textbf{0.082$\pm$0.009}   \\
\bottomrule 
\end{tabular}

\end{minipage}

\end{table*}

\textbf{Evaluation protocols.}
Table~\ref{tab:srdatasets} shows synthetic benchmark consists of strongly regular graph families, where failure rate is used to measure the proportion of non-isomorphic graph pairs not distinguished, e.g., their pairs
cannot be distinguished by the 3-WL test. Table~\ref{tab:realdatasets} displays real-world benchmarks include bioinformatics and social datasets from TUDataset~\cite{Morris+2020}, molecular property prediction datasets such as ZINC and MOLHIV~\cite{wu2018moleculenet,hu2020open}, and long-range peptide benchmarks~\cite{dwivedi2022long,singh2016satpdb}. For classification tasks, we report Accuracy, AUROC, or AP according to the benchmark protocol; for regression tasks, we report MAE. Results are reported as mean and standard deviation.
The experimental details and statistics of 0,1,2-cell after complex lifting construction are in SI Sec.IV.

\subsection{Synthetic Datasets}

\paragraph{Strongly regular graph families.} We evaluate the structural discriminative behavior of CCIN on strongly regular graph (SRG) families. 
% Graphs within the same SRG family share identical graph parameters but may be non-isomorphic, making them useful benchmarks for assessing WL-type discriminative ability. 
For each SRG family, we construct lifted combinatorial complexes with ring sizes $k\in\{4,5,6,7\}$ and measure whether each method can distinguish non-isomorphic graph pairs. Fig~\ref{fig03} reports the failure rate, defined as the proportion of non-isomorphic graph pairs that remain indistinguishable by a method. Lower failure rates indicate stronger discriminative behavior under this evaluation protocol. Compared to baselines, CCIN yields lower failure rates on the evaluated SRG families, especially when larger ring sizes are used. This observation suggests that the lifted higher-order cyclic cells provide additional structural information for distinguishing some challenging regular graph instances.

\subsection{Real-World Datasets}

\paragraph{TUDataset} Table~\ref{resultstable2_part1} reports the graph classification results on representative datasets from TUDataset~\cite{Morris+2020}. Overall, CCIN achieves competitive performance across both bioinformatics and social network benchmarks. Compared with representative message-passing GNNs, CCIN shows favorable results on several datasets. For example, relative to GIN, CCIN improves the accuracy by $7.82\%$ on MUTAG and by approximately $4.30\%$ on IMDB-B. Compared with GraphSNN, CCIN obtains relative gains of $8.90\%$ on NCI1 and $2.70\%$ on PROTEINS. These results suggest that the lifted combinatorial-complex representation can provide useful structural information beyond standard node-level message passing. Compared with topological neural network baselines, CCIN also remains competitive. Relative to CWN, CCIN achieves improvements of $7.11\%$ on MUTAG and $6.53\%$ on IMDB-B. It also shows favorable performance compared with TopNets and TopoTune on several benchmarks. The gains are not uniform across all datasets, which is expected given the diversity of graph sizes and induced higher-order structures. These results indicate that combinatorial-complex-based interactions are valuable for graph classification.

\begin{table*}[htbp]
\centering
\scriptsize
\setlength{\tabcolsep}{3pt}
\caption{Performance comparison of CCIN under different neighborhood-function ablations over 9 runs with seed 1-9. (mean$\pm$std)}
\label{resultstable2}
\begin{tabular}{lccccc|cccc}
\toprule
& MUTAG & PTC & PROTEINS & NCI1 & NCI109 & IMDB-B & IMDB-M & RDT-B & RDT-M \\
\midrule

CCIN-w/o-$\mathcal{B}$
& \underline{90.6$\pm$7.2}
& \underline{62.6$\pm$8.9}
& \underline{75.4$\pm$4.9}
& 79.5$\pm$2.2
& 77.6$\pm$2.1
& 74.7$\pm$4.7
& 51.9$\pm$3.7
& 91.4$\pm$2.4
& 54.3$\pm$2.4 \\

CCIN-w/o-$\mathcal{C}$
& \underline{90.6$\pm$8.3}
& 62.4$\pm$8.5
& 74.1$\pm$3.8
& 79.4$\pm$1.9
& 77.9$\pm$2.0
& 74.3$\pm$4.5
& 52.3$\pm$4.7
& \underline{92.0$\pm$2.4}
& 54.9$\pm$1.6 \\

CCIN-w/o-$\mathcal{N}_{\uparrow}$
& 87.8$\pm$9.2
& 62.1$\pm$6.5
& \underline{75.4$\pm$4.2}
& 73.0$\pm$3.2
& 71.8$\pm$1.4
& 73.9$\pm$2.7
& 52.0$\pm$3.6
& 85.5$\pm$3.2
& 55.0$\pm$3.7 \\

CCIN-w/o-$\mathcal{N}_{\downarrow}$
& 89.8$\pm$7.1
& 61.2$\pm$8.4
& 72.9$\pm$4.2
& 78.4$\pm$1.7
& 77.5$\pm$1.7
& 74.4$\pm$4.9
& \underline{52.8$\pm$3.6}
& 91.7$\pm$1.1
& \underline{55.5$\pm$2.4} \\

CCIN-Full
& 89.1$\pm$7.7
& 61.5$\pm$8.4
& 75.0$\pm$5.4
& \underline{79.6$\pm$1.6}
& \underline{78.5$\pm$1.4}
& \underline{75.7$\pm$4.3}
& 52.0$\pm$3.5
& 89.2$\pm$3.6
& \underline{55.5$\pm$1.5} \\

CCIN
& \textbf{96.4$\pm$2.1}
& \textbf{67.6$\pm$10.1}
& \textbf{76.1$\pm$2.5}
& \textbf{83.2$\pm$1.6}
& \textbf{81.1$\pm$2.0}
& \textbf{78.3$\pm$4.5}
& \textbf{54.7$\pm$3.1}
& \textbf{93.4$\pm$1.1}
& \textbf{57.3$\pm$1.7} \\
\bottomrule
\end{tabular}
\end{table*}

\paragraph{Large-Scale Graph and Molecular Benchmarks.} Table~\ref{resultstable2_part2} summarizes the results on large-scale social and molecular graph benchmarks. On RDT-B, CCIN achieves an accuracy of $93.4$, corresponding to a relative improvement of approximately $2.52\%$ over GIN. On RDT-M, CCIN remains competitive with recent graph and topological methods. For the molecular property prediction benchmark MOLHIV, CCIN obtains a ROC-AUC of $80.45$, giving a relative improvement of approximately $2.38\%$ over CWN. These results suggest that the proposed CCIN can scale to larger graph datasets while retaining competitive predictive performance.

\paragraph{Peptides Benchmarks.} Table~\ref{resultstable3} reports the results on PEPTIDES-FUNC and PEPTIDES-STRUCT, which are designed to evaluate long-range interaction reasoning. On PEPTIDES-FUNC, CCIN achieves an AP of $0.6493$, corresponding to relative improvements of $18.10\%$ over GIN and $4.10\%$ over CWN. It also remains competitive with strong baselines such as GPS and Subgraphormer. On PEPTIDES-STRUCT, CCIN achieves competitive regression performance. These results indicate that combinatorial-complex-based message passing can provide useful structural cues for larger molecular graphs with long-range dependencies.

\paragraph{ZINC-Small.} We further evaluate CCIN on the molecular regression benchmark ZINC-Small, where performance is measured by mean absolute error (MAE), with lower values indicating better performance. As shown in Table~\ref{table005}, CCIN performs favorably compared with classical message-passing models such as GIN and PNA. Without edge features, CCIN reduces MAE relative to GIN and PNA, indicating that lifted higher-order structural information can benefit molecular regression. With edge features, CCIN further improves its performance and remains competitive with advanced cellular and higher-order models. These results support the applicability of CCIN to fine-grained molecular property prediction.

\subsection{Ablation Study }

\begin{figure}[htbp] 
\centering 
\includegraphics[width=1\linewidth]{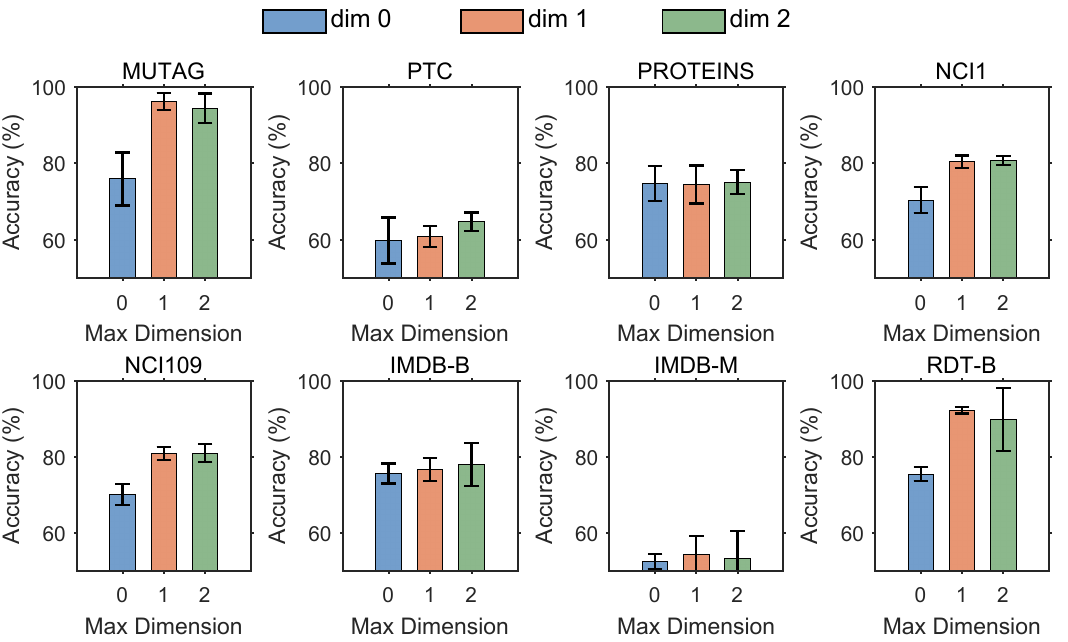} 
\caption{Ablation study of the maximum input dimension in CCIN. Error bars denote standard deviations.} 
\label{fig:dim_ablation} 
\end{figure}

\paragraph{Ablation of Maximum Dimension}
Fig~\ref{fig:dim_ablation} evaluates the effect of the maximum input dimension used in CCIN. The maximum dimension determines the highest-rank cells included in the lifted combinatorial complex and therefore controls the amount of higher-order structural information, the model shows notable performance improvements on multiple datasets. When input with 1-dim structure improves performance by 10.05\%, 10.76\%, and 16.83\% on the NCI1, NCI109, and RDT-B datasets, respectively.  This suggests that edge-level and cycle-induced structures provide useful information for graph classification, especially on molecular and social network datasets. Further CCIN introduces $2$-dimensional cells can bring relatively moderate  improvements. These results indicate that the appropriate introduction of higher-order information can achieve competitive  performance. 
 
% Table~\ref{resultstable2} studies the contribution of different neighborhood channels in CCIN. The results show that the full four-neighborhood aggregation is not always the most effective choice, which is consistent with the redundancy analysis in Theorem~\ref{theorem02}. In some datasets, removing a boundary or co-boundary channel improves performance, suggesting that certain cross-rank messages may introduce redundant or noisy information under specific data distributions. In contrast, the reduced CCIN variant using lower and upper adjacencies achieves competitive results and performs favorably on RDT-B and RDT-M. These observations empirically support the theoretical motivation that lower- and upper-adjacent bridge information can provide a compact and informative message-passing structure under the proposed lifting scheme.

\begin{table*}[htbp]
\centering
\caption{Ablation study on intermediate layer readout function. Initial, Mediate and Final Readout function are initialized as mean, sum and sum functions, respectively.}
\label{tab:readout2}
\scriptsize
\setlength{\tabcolsep}{6pt}
\begin{tabular}{clcccccccc}
\toprule
 &  & MUTAG & PTC  & PROTEINS & NCI1  & NCI109  & IMDB-B  & IMDB-M  & RDT-B \\
\midrule
\multirow{3}{*}{\rotatebox{90}{Initial}}  
& SUM
& 96.3$\pm$5.3 
& 61.8$\pm$6.4 
& 74.2$\pm$2.8 
& \textbf{81.5$\pm$0.9} 
& 79.5$\pm$2.5 
& \textbf{76.7$\pm$2.1} 
& 53.3$\pm$6.8 
& 76.5$\pm$3.9 \\

& MAX 
& 94.4$\pm$2.5 
& 63.7$\pm$6.0 
& 74.2$\pm$3.0 
& 81.4$\pm$1.5 
& 79.4$\pm$2.7 
& 76.0$\pm$1.4 
& 53.3$\pm$3.3 
& 77.2$\pm$3.9 \\

& MEAN 
& \textbf{96.3$\pm$5.2} 
& \textbf{65.7$\pm$5.0} 
& \textbf{75.1$\pm$1.1} 
& 80.9$\pm$1.3 
& \textbf{80.7$\pm$2.5} 
& 76.0$\pm$2.9 
& \textbf{53.6$\pm$5.1} 
& \textbf{78.7$\pm$2.7} \\

\midrule
\multirow{3}{*}{\rotatebox{90}{Mediate}} 
& MEAN
& \textbf{96.3$\pm$2.6}  
& 61.8$\pm$7.2  
& 73.6$\pm$1.9  
& 81.2$\pm$0.6  
& 79.7$\pm$2.5  
& 76.3$\pm$3.1  
& \textbf{53.6$\pm$3.6}  
& \textbf{79.5$\pm$5.3} \\

& SUM 
& 94.4$\pm$7.9  
& 61.8$\pm$2.4  
& 73.9$\pm$5.2  
& \textbf{81.5$\pm$1.4}  
& 79.9$\pm$1.4  
& \textbf{77.3$\pm$2.6}  
& 53.3$\pm$7.1  
& 71.8$\pm$8.3 \\

& MAX 
& 94.4$\pm$7.9  
& \textbf{64.7$\pm$2.4}  
& \textbf{74.5$\pm$3.1}  
& 80.7$\pm$1.2  
& \textbf{80.0$\pm$2.4}  
& 76.0$\pm$5.7  
& 52.4$\pm$5.4  
& 78.7$\pm$2.1 \\

\midrule
\multirow{3}{*}{\rotatebox{90}{Final}} 
& MEAN 
& \textbf{98.2$\pm$2.6} 
& \textbf{62.8$\pm$5.0} 
& 74.2$\pm$2.3 
& \textbf{81.8$\pm$0.1} 
& \textbf{81.1$\pm$1.6} 
& 77.3$\pm$3.7 
& 53.1$\pm$4.2 
& \textbf{79.8$\pm$1.8} \\

& SUM
& 96.3$\pm$2.6 
& 60.8$\pm$3.7 
& 75.1$\pm$3.3 
& 81.4$\pm$1.4 
& 80.8$\pm$0.5 
& \textbf{78.3$\pm$4.0} 
& \textbf{54.3$\pm$5.4} 
& 76.5$\pm$2.6 \\

& MAX
& 94.4$\pm$7.9 
& 61.3$\pm$5.3  
& \textbf{75.7$\pm$2.7} 
& 80.3$\pm$4.4 
& 80.7$\pm$2.1 
& 75.7$\pm$0.5 
& 54.0$\pm$6.3 
& 77.3$\pm$1.7 \\
\bottomrule
\end{tabular}
\end{table*}

\paragraph{Ablation of Neighbor Functions}
Table~\ref{resultstable2} studies the contribution of different neighborhood channels in CCIN. CCIN-Full uses all four neighborhood channels, while CCIN uses only $\mathcal{N}_{\downarrow}$ and $\mathcal{N}_{\uparrow}$.
w/o denotes removing the corresponding channel from CCIN-Full.
The results show that the full four-neighborhood aggregation is not always the most effective choice, which is consistent with the redundancy analysis in Theorem~\ref{theorem02}. For example, on RDT-B, removing $\mathcal{C}$ (w/o-$\mathcal{C}$) achieves an accuracy of $92.0$, corresponding to a relative improvement of approximately $3.14\%$ over CCIN-Full. This suggests that certain neighborhood channels may introduce redundant or noisy information under specific data distributions. In contrast, the reduced CCIN variant using only lower and upper adjacencies achieves the best performance on all evaluated datasets and obtains relative improvements of approximately $8.19\%$, $9.92\%$, and $5.19\%$ over CCIN-Full on MUTAG, PTC and IMDB-M, respectively. These observations empirically support the theoretical motivation that lower- and upper-adjacent bridge information can provide a compact and informative message-passing structure under the proposed lifting scheme.

\paragraph{Ablation of Readout Functions.} 
Table~\ref{tab:readout2} reports the effect of different readout functions at the initial, intermediate, and final stages. The readout functions, including sum, mean, and max, lead to comparable performance in the initial and intermediate stages, although their relative advantages vary across datasets. For example, mean pooling performs favorably on PTC and RDT-B in the initial readout, while max pooling gives competitive results on PTC in the intermediate readout. The final readout has a more visible impact on the overall performance. On MUTAG, for instance, mean readout achieves the highest accuracy among the compared final readout choices. These results indicate that CCIN is relatively robust to the readout choice in earlier stages, whereas the final graph-level pooling function can more directly affect downstream prediction performance.

\subsection{Parameter Sensitivity Study }

\paragraph{Ring Size}
The maximum ring size controls the largest cyclic structure introduced during the lifting process, thereby determining the range of closed higher-order patterns available to CCIN. To evaluate its effect, we vary the ring size from $4$ to $14$ and report classification accuracy on graph benchmarks.The SRG results in Fig~\ref{fig:ring_sensitivity} (a) show a similar trend from the perspective of structural discrimination: increasing the ring size generally reduces the failure rate on several SRG families, but the gain depends on the graph family and the lifting scale.  As shown in Fig~\ref{fig:ring_sensitivity} (b), moderately increasing the ring size can improve performance on several molecular datasets. For example, on NCI1, the accuracy increases from $79.40$ at ring size $4$ to $82.24$ at ring size $7$, corresponding to a relative improvement of $3.58\%$. On NCI109, the best accuracy is obtained at ring size $9$, with a relative improvement of $2.11\%$ over ring size $4$. These results suggest that larger cyclic cells can provide additional closed-structure information for molecular graph classification. The effect is not monotonic across all datasets. On MUTAG, competitive results are already obtained with small or moderate ring sizes, while some intermediate settings lead to lower accuracy. This indicates that excessively enlarging the ring size may introduce redundant or statistically unstable higher-order patterns, especially on small datasets. 
The ring-size study suggests that cyclic lifting is useful for exposing higher-order closed structures, while the optimal scale remains dataset-dependent.

\begin{figure}[h]
    \centering
    \includegraphics[width=1\linewidth]{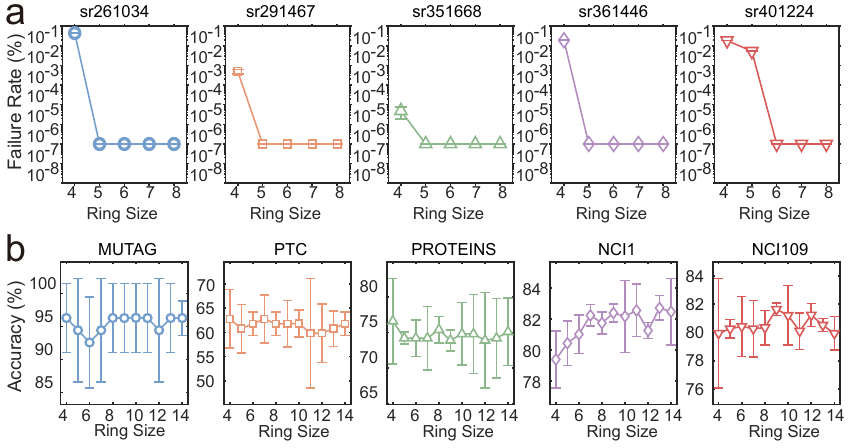}
    \caption{
    Ring-size sensitivity of CCIN. (a) Failure rates(\%) on SRG datasets. (b) Classification accuracy (mean$\pm$std) on graph benchmarks.}
    \label{fig:ring_sensitivity}
\end{figure}

\begin{figure}[h]
    \centering
    \includegraphics[width=1\linewidth]{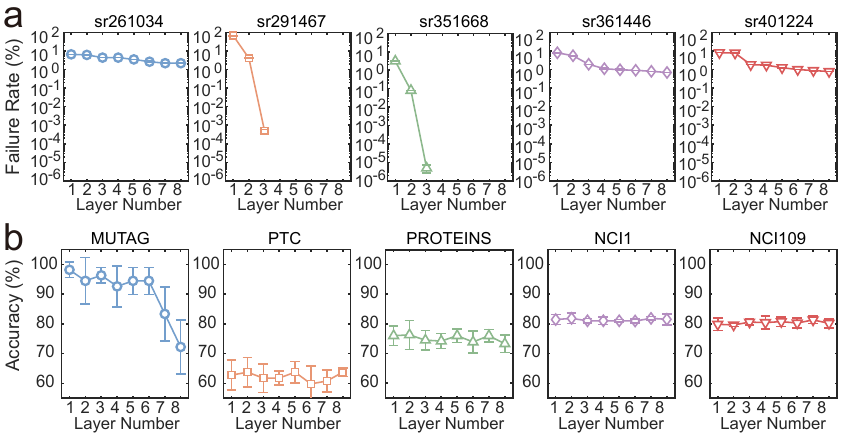}
    \caption{Layer sensitivity of CCIN. (a) Failure rates(\%) on SRG datasets. (b) Classification accuracy (mean$\pm$std) on graph benchmarks.}
    \label{fig:layer_sensitivity}
\end{figure}

\paragraph{Layer Number} We study the effect of network depth by varying the number of CCIN layers from $1$ to $8$. As shown in Fig~\ref{fig:layer_sensitivity}(a), increasing the depth from $1$ to around $3$ or $4$ generally reduces the failure rate on several SRG families, suggesting that moderate depth helps propagate structural information over the lifted combinatorial complex. Further increasing the number of layers brings limited additional reduction, indicating a saturation effect in structural discrimination. 
As reported in Fig~\ref{fig:layer_sensitivity}(b), the classification performance on molecular graph datasets is relatively stable for shallow and moderate depths. On MUTAG, the best performance is achieved with one layer, whereas deeper configurations, especially with $7$ or $8$ layers, lead to a noticeable decrease. This suggests that excessive propagation may introduce over-smoothing or redundant aggregation on small datasets. For larger molecular datasets such as NCI1 and NCI109, the performance varies less significantly across different depths, with $2$--$3$ layers achieving results comparable to deeper configurations. These results reveal a trade-off: shallow models are often sufficient for stable graph classification, while moderate depth is more beneficial for distinguishing challenging synthetic structures.

\begin{figure*}
\includegraphics[width=1\linewidth]{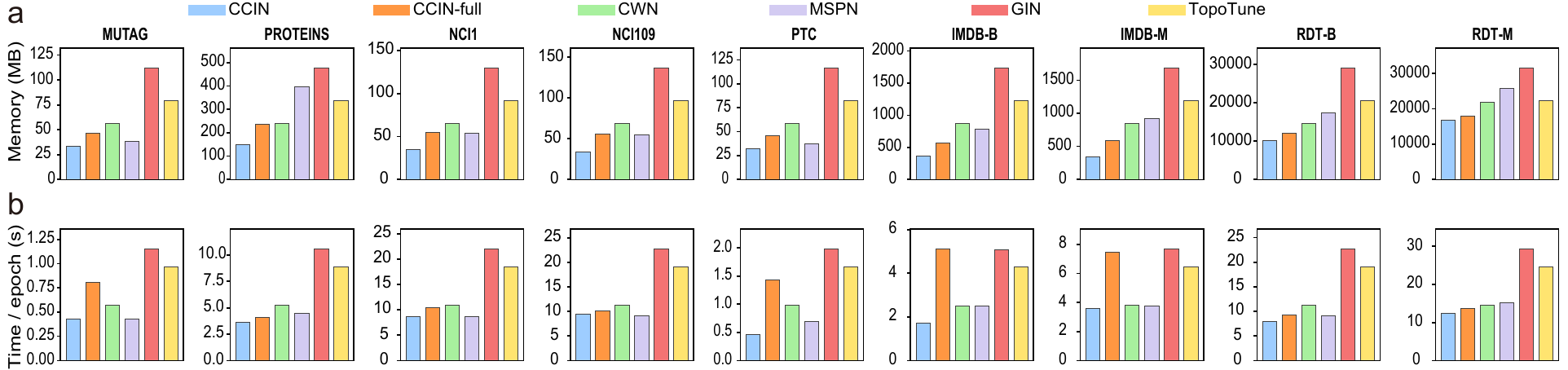}
\caption{Comparison of complexity. (a) Memory consumption (MB) and (b) Runtime (s/epoch). Lower values indicate better resource efficiency.}
\label{fig:resource}
\end{figure*}

\section{Complexity Analysis}
We analyze the computational and memory complexity of the proposed CCWL test and its neural instantiations, CCIN-Full and CCIN. In a combinatorial complex, the structural complexity around a $k$-cell $\sigma \in \mathcal{X}_k$ is characterized by both cross-dimensional and same-dimensional adjacencies. CCIN-Full aggregates messages from all four neighborhoods, including boundary $\mathcal{B}(\sigma)$, co-boundary $\mathcal{C}(\sigma)$, lower adjacency $\mathcal{N}_{\downarrow}(\sigma)$, and upper adjacency $\mathcal{N}_{\uparrow}(\sigma)$. In contrast, CCIN follows the reduced CCWL refinement in Theorem~\ref{theorem02} and only uses the lower- and upper-adjacent bridge channels. Specifically, the boundary size $|\mathcal{B}(\sigma)|$  and the co-boundary size $|\mathcal{C}(\sigma)|$ incur message passing time complexities of $\mathcal{O}(\sum_{\sigma\in \mathcal{X}_k} |\mathcal{B}(\sigma)|)$ and $\mathcal{O}(\sum_{\sigma\in \mathcal{X}_k} |\mathcal{C}(\sigma)|)$, respectively. Conversely, for same-dimensional interactions, the lower neighborhood degree $|\mathcal{N}_{\downarrow}(\sigma)|$ is determined by the number of other $k$-cells sharing a given $(k-1)$-cell $\tau$. By denoting the set of such $k$-cells associated with $\tau$ as $\mathrm{S}_k(\tau)$, the lower adjacency computation yields a time complexity of $\mathcal{O}(\sum_{\tau\in \mathcal{X}_{k-1}} |\mathrm{S}_{k}(\tau)|^2)$. Similarly, the upper neighborhood degree $|\mathcal{N}_{\uparrow}(\sigma)|$ is governed by the $k$-cells sharing a $(k+1)$-cell $\eta$. Denoting this set of $k$-cells as $\mathrm{S}_k(\eta)$, the upper adjacency results in a time complexity of $\mathcal{O}(\sum_{\eta\in \mathcal{X}_{k+1}} |\mathrm{S}_{k}(\eta)|^2)$.
SI Sec.IV reports the time cost required by the lifting construction, together with the number of cells generated in the lifted combinatorial complexes.

Fig~\ref{fig:resource} reports the memory consumption and per-epoch runtime (with lifting time) of different methods across nine graph classification datasets. The results are consistent with the intended complexity reduction of CCIN. In terms of memory usage, CCIN achieves lower memory costs, whereas GIN uses the most memory on most datasets. Compared with CCIN-Full, CCIN reduces memory usage by $29.9\%$ on average. Similar reductions are observed against other higher-order baselines, e.g., CCIN saves more than $40\%$ memory compared with CWN and TopoTune on average, with even larger reductions compared with GIN.
In terms of runtime, CCIN is the fastest or tied for the fastest method on seven out of nine datasets. Compared with CCIN-Full, CCIN decreases per-epoch runtime by $32.5\%$ on average. It also reduces runtime by $62.8\%$ and $55.9\%$ compared with GIN and TopoTune, respectively. These results suggest the practical benefit of the reduced lower/upper-neighborhood formulation: it avoids explicit cross-rank message passing while retaining the bridge information required by the CCWL sufficiency theorem.

\section{Conclusion}

In this paper, we introduced the Combinatorial Complex Weisfeiler--Leman framework for analyzing the expressive power of topological message passing on combinatorial complexes. CCWL defines a unified WL-type refinement over boundary, co-boundary, lower-adjacent, and upper-adjacent neighborhood relations, and establishes connections with several domain-specific WL variants under appropriate lifting maps. We further studied the neighborhood sufficiency problem of CCWL. Under the stated coverage and initialization assumptions, we showed that the reduced refinement using lower and upper adjacencies preserves the distinguishing power of the full four-neighborhood CCWL refinement. This result provides a principled basis for reducing explicit cross-rank propagation while retaining the bridge information required for refinement. Guided by this analysis, we instantiated the reduced rule as the Combinatorial Complex Isomorphism Network.
Experiments on synthetic and real-world benchmarks provide empirical support for the proposed framework. The results show that CCIN can exploit lifted higher-order structures for structural discrimination and achieves competitive performance against representative graph and topological neural network baselines. Ablation and resource-efficiency studies further indicate the practical benefit of the reduced lower/upper-neighborhood design. Future work will study relaxed coverage assumptions, adaptive lifting strategies, and benchmarks naively defined on combinatorial complexes.

\bibliographystyle{IEEEtran} 
\bibliography{New_IEEEtran}

\vfill

\end{document}